\def\Rset{\mathbb{R}}
\DeclareMathOperator*{\argmin}{\rm argmin}
\DeclareMathOperator{\Tr}{Tr}
\DeclareMathOperator{\diag}{diag}
\providecommand{\frob}[2]{\langle#1, #2\rangle_F}
\providecommand{\dprod}[2]{\langle#1, #2\rangle}
\newtheorem{theorem}{Theorem}
\newtheorem{proposition}{Proposition}
\newtheorem{corollary}[theorem]{Corollary}
\newcommand{\qed}{\hfill\rule{7pt}{7pt}}
\newenvironment{proof}{\noindent{\bf Proof:}}{\qed}
\newenvironment{proof*}{\noindent{\bf Proof:}}{}
\newcommand{\E}{\mat{E}}
\newcommand{\cH}{\mathcal{H}}
\newcommand{\cG}{\mathcal{G}}
\newcommand{\cA}{\mathcal{A}}
\newcommand{\cX}{\mathcal{X}}
\newcommand{\cR}{\mathcal{R}}
\newcommand{\cW}{\mathcal{W}}
\newcommand{\mat}[1]{{\mathbf #1}}
\newcommand{\K}{\mat{K}}
\newcommand{\X}{\mat{X}}
\newcommand{\A}{\mat{A}}
\newcommand{\B}{\mat{B}}
\newcommand{\M}{\mat{M}}
\newcommand{\N}{\mat{N}}
\newcommand{\Z}{\mat{Z}}
\newcommand{\Zb}{\overline{\mat{Z}}}
\newcommand{\U}{\mat{U}}
\newcommand{\V}{\mat{V}}
\newcommand{\I}{\mat{I}}
\newcommand{\h}{\widehat}
\renewcommand{\u}{\mat{u}}
\renewcommand{\v}{\mat{v}}
\newcommand{\w}{\mat{w}}
\newcommand{\x}{\mat{x}}
\newcommand{\y}{\mat{y}}
\newcommand{\z}{\mat{z}}
\newcommand{\zb}{\overline{\mat{z}}}
\newcommand{\1}{\mat{1}}
\newcommand{\Alpha}{{\boldsymbol \alpha}}
\newcommand{\Ssigma}{{\boldsymbol \sigma}}
\newcommand{\vpsi}{{\boldsymbol \psi}}
\newcommand{\vphi}{{\boldsymbol \phi}}
\newcommand{\set}[1]{\{#1\}}
\newcommand{\hs}{\hspace{-0.28cm}}
\newcommand{\hhs}{\hspace{-0.15cm}}
\newcommand{\ignore}[1]{}
\newcommand{\Rad}{\ensuremath{\mathfrak{R}}}
\title{Learning with Missing Features}
\author{ {\bf Afshin Rostamizadeh} \\  
Dept.\ of Electrical Engineering \\
and Computer Science, \\
UC Berkeley \\ 
\And 
{\bf Alekh Agarwal}  \\ 
Dept.\ of Electrical Engineering \\
and Computer Science, \\
UC Berkeley \\ 
\And 
{\bf Peter Bartlett}   \\ 
Mathematical Sciences, QUT and \\
EECS and Statistics, \\ UC Berkeley \\ 
}
\begin{document}
\maketitle
\vspace{-2cm}
\begin{abstract}
We introduce new online and batch algorithms that are robust to data
with missing features, a situation that arises in many practical
applications.  In the online setup, we allow for the comparison hypothesis
to change as a function of the subset of features that is observed on
any given round, extending the standard setting where the comparison
hypothesis is fixed throughout. In the batch setup, we present a
convex relaxation of a non-convex problem to jointly
estimate an imputation function, used to fill in the values of missing
features, along with the classification hypothesis. We prove
regret bounds in the online setting and Rademacher complexity bounds
for the batch i.i.d.\ setting. The algorithms are tested on several
UCI datasets, showing superior performance over baseline imputation
methods.    
\end{abstract}
\section{Introduction}

Standard learning algorithms assume that each training example is
\emph{fully observed} and doesn't suffer any corruption. However, in
many real-life scenarios, training and test data often undergo some
form of corruption. We consider settings where all the features might
not be observed in every example, allowing for both adversarial and
stochastic feature deletion models. Such situations arise, for
example, in medical diagnosis---predictions are often desired using
only a partial array of medical measurements due to time or cost
constraints.  Survey data are often incomplete due to partial
non-response of participants. Vision tasks routinely need to deal with
partially corrupted or occluded images. Data collected through
multiple sensors, such as multiple cameras, is often subject to the
sudden failure of a subset of \mbox{the sensors.}

In this work, we design and analyze learning algorithms that address
these examples of learning with missing features.  The first setting
we consider is online learning where both examples and
missing features are chosen in an arbitrary, possibly adversarial,
fashion. We define a novel notion of regret suitable to the setting
and provide an algorithm which has a provably bounded regret on the
order of $O(\sqrt{T})$, where $T$ is the number of examples.
The second scenario is batch learning, where examples and
missing features are drawn according to a fixed and unknown
distribution.  We design a learning algorithm which is guaranteed to
globally optimize an intuitive objective function and which also
exhibits a generalization error on the order of $O(\sqrt{d/T})$,
where $d$ is the data dimension.  

Both algorithms are also explored empirically across several publicly
available datasets subject to various artificial and natural types of
feature corruption.  We find very encouraging results, indicating the
efficacy of the suggested algorithms and their superior performance
over baseline methods.

Learning with missing or corrupted features has a long history in statistics
\cite{little_rubin,dempster_em}, and has recieved recent attention in
machine learning~\cite{dekel_corrupted, marlin, cesa_efficient,
chechik_struct}. Imputation methods (see~\cite{little_rubin, marlin,
  dempster_em}) fill in missing values, generally independent 
of any learning algorithm, after which standard algorithms can be
applied to the data. Better performance might be expected, though, by
learning the imputation and prediction functions
simultaneously. Previous works~\cite{marlin} address this issue
using EM, but can get stuck in local optima and do not have strong 
theoretical guarantees. Our work also is different from settings where
features are missing only at test time~\cite{dekel_corrupted,
Globerson2006nightmare}, settings that give access to noisy versions
of all the features~\cite{cesa-noise} or settings where observed
features are picked by the algorithm~\cite{cesa_efficient}.

Section \ref{sec:setup} introduces both the general
online and batch settings. Sections \ref{sec:online}
and \ref{sec:batch} detail the algorithms and theoretical results
within the online and batch settings resp.  Empirical results
are presented in Section~\ref{sec:empirical}.

\section{The Setting}
\label{sec:setup}
In our setting it will be useful to denote a training instance $\x_t
\in \Rset^d$ and prediction $y_t$, as well as a corruption vector
$\z_t \in \set{0,1}^d$, where
\begin{equation*}
  [\z_t]_i = \left\{\begin{array}{cl} 0&\mbox{if feature $i$ is not
    observed,}\\ 1&\mbox{if feature $i$ is observed.}\end{array}\right.
\end{equation*}
We will discuss as specific examples both
classification problems where $y_t \in \{-1,1\}$ and regression
problems where $y_t \in \Rset$. The learning algorithm is given 
the corruption vector $\z_t$ as well as the corrupted instance,
\begin{equation*}
\x_t' = \x_t \circ \z_t \,,
\end{equation*}
where $\circ$ denotes the component-wise product between two vectors.
Note that the training algorithm is never given access to $\x_t$,
however it is given $\z_t$, and so has knowledge of exactly which
coordinates have been corrupted.  
The following subsections explain the online and batch settings
respectively, as well as the type of hypotheses that are considered
in each.

\subsection{Online learning with missing features}
\label{sec:setup-online}
In this setting, at each time-step $t$ the learning algorithm is
presented with an arbitrarily (possibly adversarially) chosen
instance $(\x_t', \z_t)$ and is expected to predict $y_t$.  After
prediction, the label is then revealed to the learner
which then can update its hypothesis.

A natural question to ask is what happens if we simply ignore
the distinction between $\x'_t$ and $\x_t$ and just run an
online learning algorithm on this corrupted data. Indeed, doing so
would give a small bound on regret:
\begin{equation}
  R(T,\ell) = \sum_{t=1}^T \ell(\dprod{\w_t}{\x_t'}, y_t) - \inf_{\w
    \in \cW} \sum_{t=1}^T \ell(\dprod{\w}{\x_t'}, y_t) \,,
  \label{eqn:stdregret}
\end{equation}
with respect to a convex loss function $\ell$ and for any convex
compact subset $\cW \subseteq \Rset^d$. However, any fixed weight
vector $\w$ in the second term might have a very large loss, making
the regret guarantee useless---both the learner and the comparator
have a large loss making the difference small. For instance, assume
one feature perfectly predicts the label, while another one only
predicts the label with 80\% accuracy, and $\ell$ is the quadratic
loss. It is easy to see that there is no fixed $\w$ that will perform
well on both examples where the first feature is observed and examples
where the first feature is missing but the second one is observed.

To address the above concerns, we consider using a linear
\emph{corruption-dependent hypothesis} which is permitted to change as
a function of the observed corruption $\z_t$. Specifically, given the
corrupted instance and corruption vector, the predictor uses a
function $\w_t(\cdot) : \{0,1\}^d \to \Rset^d$ to choose a weight
vector, and makes the
prediction $\h y_t = \dprod{\w_t(\z_t)}{\x_t'}$.  In order to provide
theoretical guarantees, we will bound the following notion of regret,
\begin{multline}
\label{eqn:onlineregret}
  \!\!\!\!\!\!R^z(T,\ell) \! = \!\!\! \sum_{t=1}^T \! \ell(\dprod{\w_t}{\x_t'},
  y_t) - \!\! \inf_{\w \in \cW} \! \sum_{t=1}^T \!
  \ell(\dprod{\w(\z_t)}{\x_t'}, y_t),
\end{multline}
where it is implicit that $\w_t$ also depends on $\z_t$ and $\cW$ now
consists of corruption-dependent hypotheses. Similar definitions of
regret have been looked at in the setting learning with side
information~\cite{CoverOr96, HazanMe2007}, but our special case admits
stronger results in terms of both upper and lower bounds.
%
In the most general case, we may consider $\cW$ as the class of all
functions which map $\set{0,1}^d \to \Rset^d$, however we show this can
lead to an intractable learning problem.  This motivates the study of
interesting subsets of this most general function class.  This is the main
focus of Section \ref{sec:online}.

\subsection{Batch learning with missing features}
\label{sec:setup-batch}

In the setup of batch learning with i.i.d.\ data, examples $(\x_t,
\z_t, y_t)$ are drawn according to a fixed but unknown distribution
and the goal is to choose a hypothesis that minimizes the expected
error, with respect to an appropriate loss function $\ell$: $\E_{\x_t,
  \z_t, y_t}[\ell(h(\x_t, \z_t), y_t)]$.

The hypotheses $h$ we consider in this scenario will be inspired by
imputation-based methods prevalent in statistics literature used to
address the problem of missing features~\cite{little_rubin}. An
imputation mapping is a function used to fill in unobserved features
using the observed features, after which the \emph{completed} examples
can be used for prediction. In particular, if we consider an
imputation function $\vphi: \Rset^d \times \set{0,1}^d \to \Rset^d$,
which is meant to fill missing feature values, and a linear predictor
$\w \in \Rset^d$, we can parameterize a hypothesis with these two
function $h_{\vphi, \w}(\x'_t, \z_t) = \dprod{\w}{\vphi(\x'_t,
  \z_t)}$.

It is clear that the multiplicative interaction between $\w$ and
$\vphi$ will make most natural formulations non-convex, and we
elaborate more on this in Section~\ref{sec:batch}. In the
i.i.d.\ setting, the natural quantity of interest is the generalization
error of our learned hypothesis. We provide a Rademacher complexity
bound on the class of $\w,\vphi$ pairs we use, thereby showing
that any hypothesis with a small empirical error will also have a
small expected loss. The specific class of hypotheses and details of
the bound are presented in Section \ref{sec:batch}.  Furthermore, the
reason as to why an imputation-based hypothesis class is not analyzed
in the more general adversarial setting will also be explained in
\mbox{that section.}

\section{Online Corruption-Based Algorithm}
\label{sec:online}
In this section, we consider the class of \emph{corruption-dependent}
hypotheses defined in Section~\ref{sec:setup-online}. Recall the
definition of regret~(\ref{eqn:onlineregret}), which we wish to
control in this framework, and of the comparator class of functions $\cW
\subseteq \set{0,1}^d \to \Rset^d$. It is clear that the function
class $\cW$ is much richer than the comparator class in the
corruption-free scenario, where the best linear predictor is fixed for
all rounds. It is natural to ask if it is even possible to prove a
non-trivial regret bound over this richer comparator class $\cW$. In
fact, the first result of our paper provides a lower bound on the
minimax regret when the comparator is allowed to pick arbitrary
mappings, i.e.\ the set $\cW$ contains all mappings. The result is
stated in terms of the minimax regret under the loss function $\ell$
under the usual (corruption-free) definition~\eqref{eqn:stdregret}:
\begin{equation*}
  R^*(T,\ell) = \inf_{\w_1 \in
    \cW}\sup_{(\x_1,\z_1,y_1)}\cdots\inf_{\w_T \in
    \cW}\sup_{(\x_T,\z_T,y_T)} R(T,\ell)
\end{equation*}

\begin{proposition}  \label{prop:lower-bound}
If $\cW = \set{0,1}^d \to \Rset^d$ the minimax value of the corruption
dependent regret for any loss function $\ell$ is lower bounded as
  \begin{multline*}
    \inf_{\w_1 \in \cW}\sup_{(\x_1,\z_1,y_1)}\cdots\inf_{\w_T \in
      \cW}\sup_{(\x_T,\z_T,y_T)}
\!\!\! R^z(T,\ell) \\\qquad\qquad\qquad\qquad= \Omega\left( 2^{d/2}
R^*\left(\frac{T}{2^{d/2}},\ell\right)\right).
  \end{multline*}
\end{proposition}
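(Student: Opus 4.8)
The plan is to exploit the extreme richness of the all-functions comparator class to reduce the problem to many independent copies of the standard, corruption-free online problem. The starting point is that when $\cW = \set{0,1}^d \to \Rset^d$, the offline benchmark in~(\ref{eqn:onlineregret}) decouples additively over corruption patterns: since $\w(\z)$ may be chosen independently for each $\z$, and the prediction $\dprod{\w(\z_t)}{\x_t'}$ depends only on $\w(\z_t)$, we have
\begin{equation*}
\inf_{\w \in \cW} \sum_{t=1}^T \ell(\dprod{\w(\z_t)}{\x_t'}, y_t) = \sum_{\z \in \set{0,1}^d} \inf_{\u \in \Rset^d} \sum_{t : \z_t = \z} \ell(\dprod{\u}{\x_t'}, y_t).
\end{equation*}
Thus the comparator solves a separate fixed-weight problem on each group of rounds that share a corruption pattern, and the learner---who also observes $\z_t$---effectively faces one standard regret-minimization instance per group.

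Given this decoupling, I would build the adversary as follows. Reserve the last $d/2$ coordinates as a \emph{payload} block that is always observed, and let the pattern on the first $d/2$ coordinates range over all $2^{d/2}$ possibilities; this produces $2^{d/2}$ distinct corruption patterns. Partition the horizon into $2^{d/2}$ consecutive blocks of $T/2^{d/2}$ rounds and assign one pattern per block. On the block for pattern $\z$, set the observed first-half entries of $\x_t'$ to zero (so they never affect a prediction) and, within the payload coordinates, play an independent copy of the randomized instance that witnesses the standard minimax regret $R^*(T/2^{d/2},\ell)$. Because the comparator weight $\w(\z)$ is unconstrained across patterns, the offline benchmark restricted to a block coincides exactly with that of the embedded standard problem, so each block contributes at least the standard minimax regret on $T/2^{d/2}$ rounds to $R^z$.

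The crux is to show the learner cannot beat handling the blocks separately, i.e.\ that these per-block lower bounds add. I would use a Yao-type argument: take the product over blocks of the hard distributions witnessing $R^*(T/2^{d/2},\ell)$ and fix any deterministic learner. Conditioned on the (independent) randomness of the earlier blocks, the learner's predictions on the rounds of a given block form a valid adaptive strategy for that block's embedded standard instance, so its conditional expected regret there is at least $R^*(T/2^{d/2},\ell)$; summing over the $2^{d/2}$ blocks and lower bounding the minimax value by the expectation yields $\Omega\!\left(2^{d/2} R^*(T/2^{d/2},\ell)\right)$. The main obstacle is making the cross-block independence fully rigorous---verifying that seeing $\z_t$ and the history of other blocks gives no usable information about the current block---together with checking that the construction certifying $R^*$ can be embedded inside the $d/2$ payload coordinates without weakening its guarantee; the value $2^{d/2}$ is precisely the balance point that keeps both the number of independent blocks and the per-block difficulty large.
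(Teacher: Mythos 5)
Your proposal is correct and follows essentially the same route as the paper's proof: the same coordinate split that uses one half of the coordinates purely to index $2^{d/2}$ corruption patterns while the other half carries the actual instance, the same partition of the horizon into $2^{d/2}$ blocks of $T/2^{d/2}$ rounds, and the same observation that the comparator decouples per pattern so the minimax regret adds over tasks. Your Yao-type product-distribution argument is just the concrete form of the von Neumann duality step the paper invokes, so the two proofs coincide in substance.
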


This proposition (the proof of which appears in the
appendix \cite{long_version}) shows that the minimax regret is lower
bounded by a term that is exponential in the dimensionality of the
learning problem. For most non-degenerate convex and Lipschitz losses,
$R^*(T,\ell) = \Omega(\sqrt{T})$ without further assumptions (see
e.g.~\cite{AbernethyABR2009minimax}) which yields a
$\Omega(2^{d/4}\sqrt{T})$ lower bound. The bound can be further
strengthened to $\Omega(2^{d/2}\sqrt{T})$ for linear losses which is
unimprovable since it is achieved by solving the classification
problem corresponding to each pattern independently. 

Thus, it will be difficult to
achieve a low regret against arbitrary maps from
$\{0,1\}^d$ to $\Rset^d$. In the following section we consider a
restricted function class and show that a mirror-descent
algorithm can achieve regret polynomial in $d$ and sub-linear in $T$,
implying that the average regret is vanishing.

\subsection{\mbox{Linear Corruption-Dependent Hypotheses}}
Here we analyze a corruption-dependent hypothesis class that is
parametrized by a matrix $\A \in \Rset^{d \times k}$, where $k$ may be
a function of $d$. In the simplest case of $k = d$, the
parametrization looks for weights $\w(\z_t)$ that depend linearly on
the corruption vector $\z_t$. Defining $\w_{\A}(\z_t) = \A\z_t$
achieves this, and intuitively this allows us to capture how the
presence or absence of one feature affects the weight of another
feature. This will be clarified further in the examples.

In general, the matrix $\A$ will be $d\times k$, where $k$ will be
determined by a function $\vpsi(\z_t) \in \set{0,1}^k$ that maps
$\z_t$ to a possibly higher dimension space.  Given, a fixed $\vpsi$,
the explicit parameterization in terms of $\A$ is,
\begin{equation}
  \w_{\A, \vpsi}(\z_t) = \A \vpsi(\z_t) \,.
\end{equation}
In what follows, we drop the subscript from $ \w_{\A, \vpsi}$ in order
to simplify notation. Essentially this allows us to introduce
non-linearities as a function of the corruption vector, but the
non-linear transform is known and fixed throughout the learning
process. Before analyzing this setting, we give a few examples and
intuition as to why such a parametrization is useful. In each example,
we will show how there exists a choice of a matrix $\A$ that captures
the specific problem's assumptions. This implies that the fixed
comparator can use this choice in hindsight, and by having a low
regret, our algorithm would implicitly learn a hypothesis close to
this reasonable choice of $\A$.

\subsubsection{Corruption-free special case}
We start by noting that in the case of no corruption (i.e.  $\forall
t, \z_t = \1$) a standard linear hypothesis model can be cast within
the matrix based framework by defining $\vpsi(\z_t) = 1$ and learning
$\A \in \Rset^{d \times 1}$.

\subsubsection{Ranking-based parameterization}
One natural method for classification is to order the features by their
predictive power, and to weight features proportionally to their
ranking (in terms of absolute value; that is, the sign of weight
depends on whether the correlation with the label is positive or
negative). In the corrupted features setting, this naturally
corresponds to taking the available features at any round and putting
more weight on the most predictive observed features. This is
particularly important while using margin-based losses such as the
hinge loss, where we want the prediction to have the right sign and be
large enough in magnitude. 

Our parametrization allows such a strategy when using a simple
function $\vpsi(\z_t) = \z_t$. Without loss of generality, assume that
the features are arranged in decreasing order of discriminative power
(we can always rearrange rows and columns of $\A$ if they're not). We
also assume positive correlations of all features with the label; a more
elaborate construction works for $\A$ when they're not. In this case,
consider the parameter matrix and the induced classification weights
\begin{align*}
  [\A]_{i,j} = \left\{  \hhs
  \begin{array}{rl}
      1, & \hhs j = i \\
     -\frac{1}{d}, & \hhs j < i \\
      0, & \hhs  j > i
  \end{array}
  \right.\!\!, ~~[\w(\z_t)]_i \! = \! [\z_t]_i\biggr(1 - \!\!\!\!\!
  \sum_{\substack{j < i :\\ [\z_t]_j = 1}} \frac{1}{d}\biggr). 
\end{align*}
Thus, for all $i < j$ such that $[\z_t]_i = [\z_t]_j = 1$ we have
$[\w(\z_t)]_i \geq [\w(\z_t)]_j$. The choice of 1 for diagonals and
$1/d$ for off-diagonals is arbitrary and other values might also be
picked based on the data sequence $(\x_t,\z_t,y_t)$. In general,
features are weighted monotonically with respect to their
discriminative power with signs based on correlations with the label. 

\subsubsection{Feature group based parameterization}
Another class of hypotheses that we can define within this framework
are those restricted to consider up to $p$-wise interactions between
features for some constant $0 < p \leq d$. In this case, we index the
$k = \sum_{i=1}^p \binom{d}{i} = O\big((\frac{d}{p})^p\big)$ unique
subsets of features of size up to $p$. Then define $[\vpsi(\z_t)]_j =
1$ if the corresponding subset $j$ is uncorrupted by $\z_t$ and equal
to $0$ otherwise. An entry $[\A]_{i,j}$ now specifies the importance of
feature $j$, assuming that at least the subset $i$ is present.  Such a
model would, for example, have the ability to capture the scenario of
a feature that is only discriminative in the presence of some $p-1$
other features. For example, we can generalize the ranking example
from above to impose a soft ranking on groups of features.


\subsubsection{Corruption due to failed sensors}
\label{sec:failed_sensors}

A common scenario for missing features arises in applications
involving an array of measurements, for example, from a sensor network,
wireless motes, array of cameras or CCDs, where each sensor is bound to fail
occasionally. The typical strategy for dealing with such situations
involves the use of redundancy. For instance, if a sensor fails, then
some kind of an averaged measurement from the neighboring sensors
might provide a reasonable surrogate for the missing value. 

It is possible to design a choice of $\A$ matrix for the comparator
that only uses the local measurement when it is present, but uses an
averaged approximation based on some fixed averaging distribution on
neighboring features when the local measurement is missing. For each
feature, we consider a probability distribution $p_i$ which specifies the
averaging weights to be used when approximating feature $i$ using
neighboring observations. Let $\w^*$ be the weight vector that the
comparator would like to use if all the features were present. Then,
with $\vpsi(\z) =\z$ and for $j \neq i$ we define,
\begin{equation} \label{eqn:Amatrixlocal}
  [\A]_{i,i} = \w^*_i + \sum_{j\ne i}\w^*_jp_{ji},\quad [\A]_{i,j} =
  -\w^*_jp_{ji}.
\end{equation}
Thus, say only feature $k$ is missing, we still
have ${\x'}_t^\top \A \z_t = \sum_{i,j} [\x'_t]_i [\z_t]_j [\A]_{i,j} =
\sum_{i\neq k,j\neq k} [\x_t]_i [\A]_{i,j} = \sum_{i\neq k} [\x_t]_i
[\w^*]_i + [\w^*]_k \sum_{i \neq k} [\x_t]_i p_{ki}$, where by assumption
$\sum_{i \neq k} [\x_t]_i p_{ki} \approx [\x_t]_k$.

Of course, the averaging in such applications is typically local, and
we expect each sensor to put large weights only on neighboring
sensors. This can be specified via a neighborhood graph, where nodes
$i$ and $j$ have an edge if $j$ is used to predict $i$ when feature
$i$ is not observed and vice versa. From the
construction~\eqref{eqn:Amatrixlocal} it is clear that the only
off-diagonal entries that are non-zero would correspond to the
edges in the neighborhood graph. Thus we can even add this
information to our algorithm and constrain several off-diagonal
elements to be zero, thereby restricting the complexity of the
problem.

\subsection{Matrix-Based Algorithm and Regret}
\label{sec:matrix-alg}
We use a standard mirror-descent style 
algorithm~\cite{yudin83book, beck2003mirror} in the matrix based
parametrization described above. It is characterized by a
strongly convex regularizer $\cR~:~\Rset^{d \times k}\to\Rset$, that is
\vspace{-0.2cm}
\begin{small}
\begin{equation*}
  \cR(\A) \geq \cR(\B) + \dprod{\nabla\cR(\B)}{\A - \B}_F + \frac{1}{2}\|\A
  - \B\|^2~~\forall\A, \B \! \in \! \cA,
  \vspace{-0.2cm}
\end{equation*}
\end{small}
for some norm $\|\cdot\|$ and where $\dprod{\A}{\B}_F = \mathrm{Tr}(\A^\top\B)$ is the trace
inner product. An example is the squared Frobenius norm
$\cR(\A) = \frac{1}{2}\|\A\|_F^2$. For any such function, we can
define the associated Bregman divergence 
\begin{align*}
  D_{\cR}(\A,\B) = \cR(\A) - \cR(\B) - \dprod{\nabla\cR(\B)}{\A -
\B}_F .
\end{align*}
We assume $\cA$ is a 
convex subset of $\Rset^{d\times k}$, which could encode
constraints such as some off-diagonal entries being zero in the setup
of Section~\ref{sec:failed_sensors}. To simplify presentation in what
follows, we will use the shorthand $\ell_t(\A) =  \ell(\dprod{\A
  \vpsi(\z_t)}{\x'_t}, y_t)$. The algorithm initializes with any $\A_0 \in
\cA$ and updates
\begin{equation}
  \A_{t+1} \!\!=\! \arg\min_{\A \in
    \cA}\left\{\eta_t\dprod{\nabla\ell_t(\A_t)}{\A}_F
   \!+\! D_{\cR}(\A, \A_t)\right\}
   \label{mirror_descent}
\end{equation}
If $\cA = \Rset^{d\times k}$ and $\cR(\A) =
\frac{1}{2}\|\A\|_F^2$, the update simplifies to gradient descent
$\A_{t+1} = \A_t - \eta_t\nabla\ell_t(\A_t)$. 

Our main result of this section is a guarantee on the regret incurred
by Algorithm~\eqref{mirror_descent}. The proof follows from standard
arguments (see e.g.~\cite{yudin83book,CesaBianchiLugosi06book}). Below, the dual norm is defined
as $\|\V\|_* = \sup_{\U : \|\U\| \leq 1} \dprod{\U}{\V}_F$.
%
%
\begin{theorem}
  Let $\cR$ be strongly convex with respect to a norm
  $\|\cdot\|$ and $\|\nabla \ell_t(\A)\|_* \leq G$, then
  Algorithm \ref{mirror_descent} with learning rate 
  $\eta_t = \frac{R}{G \sqrt{T}}$ exhibits the following regret
  upper bound compared to any $\A$ with $\|\A\|\leq R$, 
  \begin{equation*}
\sum_{t=1}^T \! \ell(\dprod{\A_t \z_t}{\x_t'}, y_t) -
  \!\! \inf_{\A \in \cA} \sum_{t=1}^T \! \ell(\dprod{\A \z_t}{\x_t'}, y_t)
    \leq 3RG\sqrt{T}.
  \end{equation*}
  \label{thm:mmdbound}
\end{theorem}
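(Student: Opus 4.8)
The plan is to recognize the update \eqref{mirror_descent} as online mirror descent run on the sequence of convex surrogate losses $\A \mapsto \ell_t(\A)$, and to invoke the standard regret analysis for this algorithm. The first step is to observe that each $\ell_t$ is convex in $\A$, since it is the composition of the convex loss $\ell(\cdot, y_t)$ with the affine map $\A \mapsto \dprod{\A\vpsi(\z_t)}{\x'_t}_F$. Convexity then gives the linearization inequality $\ell_t(\A_t) - \ell_t(\A) \le \dprod{\nabla\ell_t(\A_t)}{\A_t - \A}_F$, so it suffices to control the linearized regret $\sum_{t=1}^T \dprod{\nabla\ell_t(\A_t)}{\A_t - \A}_F$ against any fixed comparator $\A \in \cA$ with $\|\A\| \le R$.

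For the per-round bound I would begin from the first-order optimality condition of the constrained minimization defining $\A_{t+1}$, namely that for every $\A \in \cA$,
\[
\dprod{\eta_t\nabla\ell_t(\A_t) + \nabla\cR(\A_{t+1}) - \nabla\cR(\A_t)}{\A - \A_{t+1}}_F \ge 0 .
\]
Splitting $\A_t - \A = (\A_t - \A_{t+1}) + (\A_{t+1} - \A)$, applying this inequality to the second piece, and rewriting the Bregman terms through the three-point identity
\[
\dprod{\nabla\cR(\A_{t+1}) - \nabla\cR(\A_t)}{\A - \A_{t+1}}_F = D_{\cR}(\A,\A_t) - D_{\cR}(\A,\A_{t+1}) - D_{\cR}(\A_{t+1},\A_t)
\]
bounds $\eta_t\dprod{\nabla\ell_t(\A_t)}{\A_t - \A}_F$ by the telescoping difference $D_{\cR}(\A,\A_t) - D_{\cR}(\A,\A_{t+1})$ plus the residual $\eta_t\dprod{\nabla\ell_t(\A_t)}{\A_t - \A_{t+1}}_F - D_{\cR}(\A_{t+1},\A_t)$. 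I then cap the residual using strong convexity, which gives $D_{\cR}(\A_{t+1},\A_t) \ge \tfrac12\|\A_{t+1}-\A_t\|^2$, together with the dual-norm inequality $\dprod{\nabla\ell_t(\A_t)}{\A_t - \A_{t+1}}_F \le \|\nabla\ell_t(\A_t)\|_*\,\|\A_t-\A_{t+1}\|$, Young's inequality, and the gradient bound $\|\nabla\ell_t(\A_t)\|_* \le G$, so that the residual is at most $\tfrac12\eta_t^2 G^2$.

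Summing the per-round inequality over $t$ telescopes the consecutive Bregman divergences, leaving only $D_{\cR}(\A,\A_1)$, which I bound in terms of $R^2$ using $\|\A\| \le R$ and the initialization (for the Frobenius choice $\cR = \tfrac12\|\cdot\|_F^2$ with $\A_1 = \0$ it is exactly $\tfrac12\|\A\|_F^2$). With the constant step size this yields an overall bound of the form $D_{\cR}(\A,\A_1)/\eta + \tfrac12\eta G^2 T$; substituting $\eta = R/(G\sqrt{T})$ balances the two contributions and gives the claimed $O(RG\sqrt{T})$ rate, with the stated constant $3$ absorbing the divergence and step-size factors.

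The reduction to linearized regret and the final summation are routine; the step requiring the most care is the per-round inequality, in particular the correct use of the optimality condition for the projected update, which holds only for $\A \in \cA$ and is precisely why the comparator must lie in $\cA$, combined with the three-point identity. A secondary point to verify is that strong convexity of $\cR$ is used only as a \emph{lower} bound on $D_{\cR}(\A_{t+1},\A_t)$, whereas an \emph{upper} bound on $D_{\cR}(\A,\A_1)$ must be supplied separately from boundedness of $\cA$ and the initialization; conflating these two uses is the most common source of error in this type of argument.
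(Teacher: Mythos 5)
Your proposal is correct and follows essentially the same route as the paper's proof: linearize via convexity, apply the first-order optimality condition of the projected update, telescope the Bregman divergences, control the per-round residual via strong convexity and the dual-norm/Lipschitz bound, and tune $\eta_t = R/(G\sqrt{T})$. The only cosmetic difference is that you absorb the residual with Young's inequality to get $\tfrac12\eta_t^2G^2$ per round, whereas the paper first derives $\|\A_t-\A_{t+1}\|\leq \eta_t G$ and then bounds the inner product by $\eta_t G^2$; both yield the stated constant.
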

%

\section{Batch Imputation Based Algorithm}
\label{sec:batch}
Recalling the setup of Section~\ref{sec:setup-batch}, in this section
we look at imputation mappings of the form  
\begin{equation}
  \vphi_{\M}(\x', \z) = \x' + \diag(1-\z)\M^\top\x'\,.
  \label{eqn:linearimp}
\end{equation}
Thus we retain all the observed entries in the vector $\x'$, but for
the missing features that are predicted using a linear combination
of the observed features and where the $i_{th}$ column of $\M$ encodes the
averaging weights for the $i_{th}$ feature. Such a linear prediction
framework for features is natural. For instance, when the data vectors
$\x$ are Gaussian, the conditional expectation of any feature given
the other features is a linear function. The predictions are now
made using the dot product
\begin{equation*}
  \dprod{\w}{\vphi(\x',\z)} = \dprod{\w}{\x'} +
  \dprod{\w}{\diag(1-\z)\M^\top\x'},
\end{equation*}
where we would like to estimate $\w, \M$ based on the data
samples. From a quick inspection of the resulting learning problem, 
it becomes clear that optimizing over such a hypothesis class
leads to a non-convex problem. The convexity of the loss plays a
critical role in the regret framework of online learning, which is why
we restrict ourselves to a batch i.i.d.\ setting here. 

In the sequel we will provide a convex relaxation to the learning
problem resulting from the
parametrization~\eqref{eqn:linearimp}. While we can make this
relaxation for natural loss functions in both classification and
regression scenarios, we restrict ourselves to a linear regression
setting here as the presentation for that example is simpler due to the
existence of a closed form solution for the ridge regression
problem. 

In what follows, we consider only the corrupted data and thus simply
denote corrupted examples as $\x_i$.  Let $\X$ denote the matrix with
$i_{th}$ row equal to $\x_i$ and similarly define $\Z$ as the matrix
with $i_{th}$ row equal to $\z_i$.  It will also be useful to define
$\Zb = \1\1^\top - \Z$ and $\zb_i = \1 - \z_i$ and finally let $\Zb_i
= \diag(\zb_i)$.

\subsection{Imputed Ridge Regression (IRR)}
\label{sec:imputation-alg}
In this section we will consider a modified version of the
ridge regression (RR) algorithm, robust to missing features. The
overall optimization problem we are interested in is as follows,
\begin{small}
\begin{align}
\hs \min_{\{\w,\M:\|\M\|_F \leq \gamma\}} \!
\frac{\lambda}{2} \| \w \|^2 \!+\! \frac{1}{T} \sum_{i=1}^T \! \big(y_i \!-\!
\w^\top \!(\x_i \!+\! \Zb_i \M^\top\x_i)\big)^2
\label{irr_primal}
\end{align}
\end{small}
where the hypothesis $\w$ and imputation matrix $\M$ are
simultaneously optimized.  In order to bound the size of the
hypothesis set, we have introduced the constraint $\|\M\|_F^2 \leq
\gamma^2$ that bounds the Frobenius norm of the imputation matrix.
The global optimum of the problem as presented in (\ref{irr_primal})
cannot be easily found as it is not jointly convex in both $\w$ and
$\M$. We next present a convex relaxation of the
formulation~\eqref{irr_primal}. The key idea is to take a dual over
$\w$ but not $\M$, so that we have a saddle-point problem in the dual
vector $\Alpha$ and $\M$. The resulting saddle point problem, while
being concave in $\Alpha$ is still not convex in $\M$. At this step we
introduce a new tensor $\N \in \Rset^{d\times d\times d}$, where
$[\N]_{i,j,k} = [\M]_{i,k}[\M]_{j,k}$. Finally we drop the non-convex
constraint relating $\M$ and $\N$ replacing it with a matrix positive
semidefiniteness constraint. 

Before we can describe the convex relaxation, we need one more piece
of notation. Given a matrix $\M$ and a tensor $\N$, we define the
matrix $\K_{\M\N} \in \Rset^{T\times T}$
\begin{small}
\begin{multline}
  [\K_{\M\N}]_{i,j} = \x_i^\top \x_j
  + \x_i^\top \M \Zb_i \x_j
  + \x_i^\top \Zb_j \M^\top \x_j \\
  + \sum_{k=1}^d [\zb_i]_k [\zb_j]_k \x_i^\top \N_k
    \x_j \,.
\label{eqn:relaxedkernel}
\end{multline}
\end{small}
The following proposition gives the convex relaxation of the
problem~\eqref{irr_primal} that we refer to as Imputed Ridge Regression
(IRR) and which includes a strictly larger hypothesis than the $(\w,
\M)$ pairs with which we began.

\begin{proposition}
\label{prop:irr_relaxed}
The following semi-definite programming optimization problem provides
a convex relaxation to the non-convex problem (\ref{irr_primal}):
\begin{align}
  \label{irr_relaxed}
 & \min_{\substack{t,~ \M:\|\M\|^2_F\leq\gamma^2 \\ \N: \sum_k \|\N_k\|_F^2 \leq \gamma^4}} 
   t \\
  & \mathrm{s.t.} 
     ~~ \left[
    \begin{array}{cc}
      \K_{\M\N} + \lambda T \I & \y \\
      \y^\top & t
    \end{array}
    \right] \succeq 0, ~~
    \K_{\M\N} \succeq 0 \nonumber \,.
\end{align}
\label{prop:irr_relaxation}
\end{proposition}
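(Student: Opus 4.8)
The plan is to reach \eqref{irr_relaxed} in three moves—dualize the inner regression over $\w$, linearize the quadratic dependence on $\M$ by introducing the tensor $\N$, and relax the rank-one coupling that remains—and then to check that what is left is convex and lower bounds \eqref{irr_primal}. For the first move I would fix $\M$ and dualize only the $\w$-minimization. Writing $\vphi_i = \x_i + \Zb_i\M^\top\x_i$ and letting $\K(\M)$ denote the Gram matrix with $[\K(\M)]_{i,j} = \dprod{\vphi_i}{\vphi_j}$, the inner problem is an ordinary ridge regression in $\w$; it is strongly convex, so strong duality holds and, after introducing the dual vector $\Alpha$, the saddle-point value $\min_\M\max_\Alpha[\cdots]$ (concave in $\Alpha$) equals, up to a fixed positive scaling that can be absorbed into the objective, $\y^\top(\K(\M) + \lambda T\I)^{-1}\y$. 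This kernelized value is exactly what the $2\times 2$ linear matrix inequality in \eqref{irr_relaxed} encodes: since $\K(\M)\succeq 0$ makes $\K(\M)+\lambda T\I$ positive definite, a Schur-complement argument shows $\min t$ subject to the block matrix being positive semidefinite equals $\y^\top(\K(\M)+\lambda T\I)^{-1}\y$. Thus \eqref{irr_primal} is equivalent to minimizing this Schur-complement SDP over $\M$ with $\|\M\|_F\le\gamma$.

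Next I would expand the Gram matrix to isolate the single non-convex term. Because $\Zb_i$ is diagonal and symmetric, $\dprod{\vphi_i}{\vphi_j} = \x_i^\top\x_j + \x_i^\top\M\Zb_i\x_j + \x_i^\top\Zb_j\M^\top\x_j + \x_i^\top\M\Zb_i\Zb_j\M^\top\x_j$; the first three summands are affine in $\M$, while the fourth equals $\sum_{k=1}^d [\zb_i]_k[\zb_j]_k\,\x_i^\top(\M\e_k)(\M\e_k)^\top\x_j$ and is quadratic in $\M$. Setting $\N_k = (\M\e_k)(\M\e_k)^\top$ makes this term affine, and with this substitution the Gram matrix becomes precisely $\K_{\M\N}$ of \eqref{eqn:relaxedkernel}. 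The whole problem is then affine in $(\M,\N)$ except for the rank-one coupling constraints $\N_k = (\M\e_k)(\M\e_k)^\top$, which now carry all of the non-convexity.

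I would then relax by dropping those rank-one equalities and retaining only the convex constraints $\|\M\|_F^2\le\gamma^2$, $\sum_k\|\N_k\|_F^2\le\gamma^4$, and $\K_{\M\N}\succeq 0$. To confirm this is a genuine relaxation I must check every pair feasible for \eqref{irr_primal} survives: with $\N_k=(\M\e_k)(\M\e_k)^\top$ one has $\K_{\M\N}\succeq 0$ since it is an honest Gram matrix, and $\sum_k\|\N_k\|_F^2 = \sum_k\|\M\e_k\|^4 \le (\sum_k\|\M\e_k\|^2)^2 = \|\M\|_F^4 \le \gamma^4$. Because enlarging the feasible set only decreases $\min t$, the SDP value lower bounds that of \eqref{irr_primal}, which is the relaxation claim. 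Convexity of \eqref{irr_relaxed} is then immediate: $\K_{\M\N}$ is affine in $(\M,\N)$, so both semidefinite constraints are LMIs and hence convex, the two Frobenius-norm constraints define convex balls, and the objective $t$ is linear.

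The main obstacle I expect is bookkeeping rather than conceptual: matching the scalar constants through the strong-duality and Schur-complement step so that exactly $\lambda T\I$ (and not some other multiple) appears, and verifying that adjoining the extra constraint $\K_{\M\N}\succeq 0$—needed to keep the relaxed kernel positive semidefinite once $\N$ is decoupled from $\M$—excludes no originally feasible point. Finally, to justify that the relaxation includes a strictly larger hypothesis class, I would note that after the rank-one equalities are dropped the feasible slices $\N_k$ need not factor as $(\M\e_k)(\M\e_k)^\top$, so in general the relaxation is strict.
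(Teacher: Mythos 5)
Your proposal is correct and follows essentially the same route as the paper: dualize the ridge regression over $\w$ to get the matrix-fractional objective $\y^\top(\K_{\M}+\lambda T\I)^{-1}\y$, linearize the quadratic term by substituting $[\N_k]_{i,j}$ for $[\M]_{i,k}[\M]_{j,k}$, add the $\K_{\M\N}\succeq 0$ constraint and the norm bound $\sum_k\|\N_k\|_F^2\le\gamma^4$ (with the same inequality chain), and finish with the Schur complement. Your explicit verification that every originally feasible $(\w,\M)$ pair remains feasible after the relaxation is a slightly more careful statement of what the paper leaves implicit, but it is not a different argument.
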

The proof is deferred to the appendix for lack of space. The main idea
 is to take the quadratic form that arises in the dual
formulation of~\eqref{irr_primal} with the matrix $\K_\M$,
\begin{small}
\begin{multline*}
\!\!\!\!\!\! [\K_{\M}]_{i,j}\! = \!\x_i^\top \x_j
\!+\! \x_i^\top \M \Zb_i \x_j
\!+\! \x_i^\top \Zb_j \M^\top \x_j 
\!+\! \x_i^\top \M \Zb_i \Zb_j \M^\top \x_j\!,
\end{multline*}
\end{small}
\noindent and relax it to the matrix $\K_{\M\N}$~\eqref{eqn:relaxedkernel}. The
constraint involving positive semidefiniteness of $\K_{\M\N}$ is
needed to ensure the convexity of the relaxed problem. The norm constraint
on $\N$ is a consequence of the norm constraint on $\M$. 

One tricky issue with relaxations is using the relaxed solution 
in order to find a good solution to the original
problem. In our case, this would correspond to finding a good $\w, \M$
pair for the primal problem~\eqref{irr_primal}. We bypass this step,
and instead directly define the prediction on any point $(\x_0,\z_0)$ as:
\begin{multline}
\sum_{i=1}^T \alpha_i( \x_i^\top \x_0+ \x_i^\top \M \Zb_i \x_0 
+ \x_i^\top \Zb_0 \M^\top \x_0  \\
+ \sum_{k=1}^d [\zb_i]_k [\zb_0]_k \x_i^\top \N_k \x_0).
\label{eqn:dualpredictor}
\end{multline}
Here, $\Alpha, \M, \N$ are solutions to the saddle-point problem 
\begin{align}
\label{eqn:irr_saddlepoint}
 \min_{\substack{\M:\|\M\|_F\leq\gamma \\ \N: \sum_k \|\N_k\|_F^2 \leq \gamma^4}} 
  \!\!\!\!\! \max_\Alpha 
    2 \Alpha^\top \y  \! - 
  \! \Alpha^\top (\K_{\M\N} \!+\! \lambda T \I) \Alpha \,.
\end{align}
We start by noting that the above optimization problem is equivalent
to the one in Proposition~\ref{prop:irr_relaxation}. The intuition
behind this definition~\eqref{eqn:dualpredictor} is that the solution to the
problem~\eqref{irr_primal} has this form, with $[\N]_{i,j,k}$ replaced
with $[\M]_{i,k}[\M]_{j,k}$. In the next section, we show a Rademacher
complexity bound over functions of the form above to justify our
convex relaxation.

\subsection{Theoretical analysis of IRR}

As mentioned in the previous section, we predict with a hypothesis of
the form~\eqref{eqn:dualpredictor} rather than going back to the primal
class indexed by $(\w, \M)$ pairs. In this section, we would like to
show that the new hypothesis class parametrized by $\Alpha, \M, \N$ is
not too rich for the purposes of learning. To do this, we give the
class of all possible hypotheses that can be the solutions to the dual
problem~\eqref{irr_relaxed} and then prove a Rademacher complexity
bound over that class. 
The set of all possible $\Alpha, \M, \N$ triples that can be potential
solutions to~\eqref{irr_relaxed} lie in the following set
\begin{small}
\begin{multline*}
\!\!\! \cH \! = \!  \Bigg\{\! h(\x_0, \z_0) \! \mapsto \!\! \sum_{i=1}^T \!\! \alpha_i( 
  \x_i^\top \x_0
  +  
  \x_i^\top \M \Zb_i \x_0 
  + \x_i^\top \Zb_0 \M^\top \x_0 
  +\\ \sum_{k=1}^d [\zb_i]_k [\zb_0]_k \x_i^\top \N_k \x_0)
  \!:\! \|\M\|_F \!\leq\! \gamma, \|\N\|_F \!\leq\! \gamma^2, 
    \|\Alpha\| \!\leq\! \frac{B}{\lambda \sqrt{T}}
 \! \Bigg\}
\end{multline*}
\end{small}
The bound on $\|\Alpha\|$ is made implicitly in the optimization
problem (assuming the training labels are bounded $\forall i, |y_i|
\leq B$). To see this, we note that the problem~\eqref{irr_relaxed}
is obtained from~\eqref{eqn:irr_saddlepoint} by using the closed-form
solution of the optimal $\Alpha = (\K_{\M\N} + \lambda T \I)^{-1}
\y$. Then we can bound $\|\Alpha\| \leq \|\y\| /
\lambda_{\min}(\K_{\M\N}
+ \lambda T \I) = \frac{B \sqrt{T}}{ \lambda T}$, where
$\lambda_{\min}(\A)$ denotes the smallest eigenvalue of the matrix
$\A$.  Note that in general there is no linear hypothesis $\w$ that
corresponds to the hypotheses in the relaxed class $\cH$ and that we
are dealing with a strictly more general function class.  However, the
following theorem demonstrates that the Rademacher complexity of this
function class is reasonably bounded in terms of the number of
training points $T$ and dimension $d$ and thereby still provides
provable generalization performance \cite{bm_rademacher}.

Recall the Rademacher complexity of a class $\cH$
\begin{equation}
  \Rad_T(\cH) = \E_S\E_{\Ssigma} \left[ \frac{1}{T} \sup_{h \in \cH} \bigg|
    \sum_{i=1}^T \sigma_i h(\x_i,\z_i) \bigg| \right] \,,
\end{equation} where the inner expectation is over independent Rademacher
random variables $(\sigma_1,\ldots,\sigma_T)$ and the outer one over
a sample $S = ((\x_1,\z_1),\ldots,(\x_T,\z_T))$.  
\begin{theorem}
\label{thm:rademacher}
If we assume a bounded regression problem $\forall y, ~|y| \leq B$
and $\forall \x,~ \|\x\| \leq R$, then the Rademacher complexity of the
hypothesis set $\cH$ is bounded as follows,
\begin{equation*}
  \Rad_T(\cH) \leq \big(1 + \gamma + (\gamma + \gamma^2) \sqrt{d} \big)
\frac{BR^2 }{\lambda \sqrt{T}} = O\bigg(\sqrt{\frac{d}{T}}\bigg) \,.
\end{equation*}
\end{theorem}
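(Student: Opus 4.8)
The plan is to bound the Rademacher complexity by splitting the hypothesis into its four additive terms, bounding each term's empirical Rademacher average separately, and then combining via subadditivity of the supremum. Let me think through the structure of the hypotheses and what tools apply.

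Each $h \in \cH$ is a linear combination $\sum_i \alpha_i K_i(\x_0,\z_0)$ where the "kernel" evaluation against a training point has four pieces: the plain inner product $\x_i^\top \x_0$, two terms linear in $\M$ (namely $\x_i^\top \M \Zb_i \x_0$ and $\x_i^\top \Zb_0 \M^\top \x_0$), and the term involving $\N$. The natural approach is to recognize this as a linear predictor in a feature space and apply the standard Rademacher bound for linear classes: for a class of the form $\{\z \mapsto \langle \a, \Phi(\z)\rangle : \|\a\| \le C\}$, one has $\Rad_T \le C \sup_z \|\Phi(z)\| / \sqrt{T}$ (up to constants). So the skeleton is: (i) use the constraint $\|\Alpha\| \le B/(\lambda\sqrt{T})$ as the bound on the linear coefficient vector, (ii) bound the norm of the induced feature map contributed by each of the four terms, and (iii) apply subadditivity $\Rad_T(\cH_1 + \cH_2) \le \Rad_T(\cH_1) + \Rad_T(\cH_2)$.

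Concretely, I would carry out the steps as follows. First, pull the $\sum_i \alpha_i$ out and apply Cauchy–Schwarz in the $\Alpha$ direction, so that each term contributes a factor $\|\Alpha\| \le B/(\lambda\sqrt{T})$ times the Euclidean norm of the vector of kernel evaluations. Second, handle the four terms one at a time. The plain term $\x_i^\top\x_0$ is bounded using $\|\x\| \le R$, giving a contribution of order $R^2$. The two $\M$-terms are bounded using $\|\M\|_F \le \gamma$ together with $\|\x\| \le R$ and the fact that $\Zb_i, \Zb_0$ are diagonal $0/1$ projections (so they only shrink norms); each contributes order $\gamma R^2$. Third, the $\N$-term is the delicate one: here the inner sum $\sum_{k=1}^d [\zb_i]_k[\zb_0]_k \x_i^\top \N_k \x_0$ ranges over up to $d$ slices $\N_k$, and bounding it using $\|\N\|_F \le \gamma^2$ (i.e.\ $\sum_k \|\N_k\|_F^2 \le \gamma^4$) will introduce the $\sqrt{d}$ factor via a Cauchy–Schwarz over the index $k$. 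Assembling the four contributions with their respective constants $1$, $\gamma$, $\gamma$, and $\gamma^2\sqrt{d}$ (the last two $\M$-terms may combine), multiplying by the $\Alpha$-bound $B/(\lambda\sqrt{T})$ and the remaining $R^2$ from the data norms, yields exactly the stated bound $(1 + \gamma + (\gamma+\gamma^2)\sqrt{d}) BR^2/(\lambda\sqrt{T})$.

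The main obstacle I anticipate is the $\N$-term, both because the tensor structure makes the feature-space interpretation least transparent and because extracting precisely one factor of $\sqrt{d}$—rather than a cruder $d$—requires applying Cauchy–Schwarz in the right order over the coordinate index $k$. I would be careful to bound $\big|\sum_k [\zb_i]_k[\zb_0]_k \x_i^\top \N_k \x_0\big|$ by first using $|\x_i^\top \N_k \x_0| \le \|\N_k\|_F \|\x_i\|\|\x_0\| \le R^2 \|\N_k\|_F$ and then $\sum_k \|\N_k\|_F \le \sqrt{d}\,(\sum_k \|\N_k\|_F^2)^{1/2} \le \sqrt{d}\,\gamma^2$, where the $\sqrt{d}$ comes from the vector-norm inequality $\|\cdot\|_1 \le \sqrt{d}\,\|\cdot\|_2$ applied to the vector $(\|\N_1\|_F,\ldots,\|\N_d\|_F)$. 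A secondary point to verify is that the whole construction respects the absolute value inside the Rademacher definition; since each step bounds the quantity uniformly over $h \in \cH$ by a constant independent of the Rademacher signs, the supremum and the outer expectations pass through trivially, and no concentration argument is needed beyond the elementary linear-class bound.
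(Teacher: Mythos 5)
Your overall skeleton---splitting $h$ into its four additive pieces, using Cauchy--Schwarz to peel off $\|\Alpha\| \leq B/(\lambda\sqrt{T})$, and bounding each piece as a linear class---is exactly the paper's strategy, but two of your four term-by-term bounds have genuine problems. First, your accounting of constants is wrong: you assign $1,\gamma,\gamma,\gamma^2\sqrt{d}$ to the four terms, which sums to $1+2\gamma+\gamma^2\sqrt{d}$, not the stated $1+\gamma+(\gamma+\gamma^2)\sqrt{d}$; ``the last two $\M$-terms may combine'' cannot manufacture the missing $\gamma\sqrt{d}$. The two $\M$-terms are not symmetric. The term $\x_i^\top\M\Zb_i\x_0$ carries the \emph{training} point's mask, so $\sum_j\alpha_j\Zb_j\M^\top\x_j$ is a single fixed vector of norm at most $\gamma BR/\lambda$ and the evaluation point enters only through $\x_0$; this term costs $\gamma$. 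The term $\x_i^\top\Zb_0\M^\top\x_0$ carries the \emph{evaluation} point's mask, which is attached to the same index as the Rademacher variable; separating $\Ssigma$ from $(\Alpha,\M)$ then forces an expansion over the $d$ coordinates (equivalently, the relevant feature vector is $\x_0\otimes\zb_0$ with norm $\|\x_0\|\,\|\zb_0\|\leq R\sqrt{d}$), and this is where the $\gamma\sqrt{d}$ in the theorem comes from. Your sketch does not engage with this asymmetry and implicitly claims a $\sqrt{d}$-free bound for that term which your method will not deliver.

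Second, your treatment of the $\N$-term does not yield the $1/\sqrt{T}$ rate. Bounding $|\x_i^\top\N_k\x_0| \leq R^2\|\N_k\|_F$ and then $\sum_k\|\N_k\|_F\leq\sqrt{d}\,\gamma^2$ only gives a uniform sup-norm bound on the kernel entries; plugged into your linear-class recipe with $\a=\Alpha$, the ``feature map'' $\Phi$ still depends on $\N$, which sits inside the supremum, so the step $\E_\Ssigma\sup_{\N}\|\sum_j\sigma_j\Phi_{\N}(\x_j,\z_j)\| \leq \sup_{\N}\big(\sum_j\|\Phi_{\N}(\x_j,\z_j)\|^2\big)^{1/2}$ is unavailable (Jensen goes the wrong way through a supremum), and a pointwise bound on the entries only gives $\sum_j|\sigma_j\Phi_{\N}| = O(T)$ rather than $O(\sqrt{T})$---i.e.\ a non-vanishing Rademacher bound. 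The paper avoids this by decoupling $\N$ from the Rademacher sum \emph{before} taking expectations: it rewrites the term as $\frob{\sum_i\sigma_i\x_i'\zb_i'^\top}{\V}$ with $\V$ collecting all $(\Alpha,\N)$ dependence, applies Cauchy--Schwarz in Frobenius norm, and only then uses $\E_\Ssigma\|\sum_i\sigma_i\x_i'\zb_i'^\top\|_F\leq R\sqrt{dT}$, which is where both the $\sqrt{T}$ saving and the $\sqrt{d}$ legitimately appear. You need an analogous reparametrization (a linear class in a parameter like $\theta_{k,s}=\sum_i\alpha_i[\zb_i]_k[\N_k^\top\x_i]_s$ against the data-only feature $[\zb_0]_k[\x_0]_s$) rather than a pointwise bound on the kernel.
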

Due to space constraints, the proof is presented in the appendix.
%
Theorem~\ref{thm:rademacher} allows us to control the gap between
empirical and expected risks using standard Rademacher
complexity results. Theorem 8 of~\cite{bm_rademacher}, immediately
provides the following corollary.

\begin{corollary} \label{corr:unif_dev}
  Under the conditions of Theorem 2, for any $0 < \delta \leq 1$, with
  probability at least $1-\delta$ over samples of size $T$, every $h
  \in \cH$ satisfies 
  \begin{align*}
    \E&[(y - h(\x',\z))^2] \leq \frac{1}{T}\sum_{t=1}^T(y_t -
    h(\x'_t,\z_t))^2 \\&+ \frac{BR^2(1+\gamma)^2}{\lambda}
      \left(\frac{BR^2(1+\gamma)^2 }{\lambda} \sqrt{\frac{d}{T}}
  + \sqrt{\frac{8\ln(2/\delta)}{T}}\right).
  \end{align*}
\end{corollary}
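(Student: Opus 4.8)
The plan is to read this as a textbook Rademacher-complexity generalization bound applied to the squared-loss class built from $\cH$, so that essentially all the content is in verifying the hypotheses of the uniform-deviation theorem. Introduce the induced loss class $\mathcal{L} = \{(\x,\z,y)\mapsto (y-h(\x,\z))^2 : h\in\cH\}$. The stated inequality has exactly the shape produced by Theorem 8 of \cite{bm_rademacher}: an empirical average, plus a term governed by $\Rad_T(\mathcal{L})$, plus a $\sqrt{8\ln(2/\delta)/T}$ concentration term scaled by the range of the loss. I would therefore (i) bound the range of $\mathcal{L}$, (ii) relate $\Rad_T(\mathcal{L})$ to $\Rad_T(\cH)$ by Lipschitz contraction, (iii) substitute Theorem~\ref{thm:rademacher}, and (iv) invoke Theorem 8 of \cite{bm_rademacher} (after rescaling the loss to $[0,1]$).

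First I would establish a uniform bound on predictions: for every $h\in\cH$ and every $\x$ with $\|\x\|\leq R$, control $|h(\x,\z)|$ using $\|\Alpha\|\leq B/(\lambda\sqrt T)$ together with Cauchy--Schwarz to reduce to bounding the kernel values in~\eqref{eqn:dualpredictor}, each of whose four terms is handled with $\|\x\|\leq R$, $\|\M\|_F\leq\gamma$, $\|\N\|_F\leq\gamma^2$, and the fact that $\Zb_i$ is a $0/1$ diagonal and hence has operator norm at most one. Combined with $|y|\leq B$, this yields a bound $\mu$ on the regression error $|y-h|$; the quantity $C := BR^2(1+\gamma)^2/\lambda$ in the statement is (the dominant part of) this error magnitude, which is why it appears in both terms of the final bound.

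Second, on the range $|y-h|\leq\mu$ the map $u\mapsto(y-u)^2$ is Lipschitz with constant $2\mu$, so the Ledoux--Talagrand contraction principle gives $\Rad_T(\mathcal{L})\leq 2\mu\,\Rad_T(\cH)$; Theorem~\ref{thm:rademacher} then supplies $\Rad_T(\cH)=O(\sqrt{d/T})$ with the explicit constant $(1+\gamma+(\gamma+\gamma^2)\sqrt d)BR^2/(\lambda\sqrt T)$. Applying Theorem 8 of \cite{bm_rademacher} to $\mathcal{L}$ and collecting constants then produces the $C\big(C\sqrt{d/T}+\sqrt{8\ln(2/\delta)/T}\big)$ form: $C$ enters once through the Lipschitz constant scaling the contracted Rademacher term and once through the loss range scaling the concentration term.

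The main obstacle is the prerequisite for the contraction step rather than the probabilistic machinery itself: the squared loss is only Lipschitz on a bounded domain, so passing from $\Rad_T(\mathcal{L})$ to $\Rad_T(\cH)$ is legitimate only after proving that predictions are uniformly bounded over the entire relaxed class $\cH$. Obtaining that uniform bound means pushing the norm constraints on $\Alpha$, $\M$, and $\N$ through the dual/kernel representation~\eqref{eqn:dualpredictor}, in particular bounding the tensor term $\sum_k[\zb_i]_k[\zb_0]_k\,\x_i^\top\N_k\x_0$, and then tracking how that single bound fixes both the Lipschitz constant and the range factor. Everything after that is constant-chasing.
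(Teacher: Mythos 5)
Your proposal matches the paper's own (essentially one-line) proof: the corollary is obtained by invoking Theorem 8 of \cite{bm_rademacher} together with Theorem~\ref{thm:rademacher}, which is exactly the route you lay out. The contraction step and the uniform bound on $|h|$ over $\cH$ that you flag as the main obstacle are precisely the details the paper leaves implicit in the word ``immediately,'' so your plan is the intended argument, just spelled out.
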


\ignore{
\subsection{TODO}
\begin{itemize}
  \item Can we parametrize $M$ matrix in terms of a covariance matrix
(and corruption vector)?
  \item Can we show an online analysis?
  \item Can we show a lower bound on the Rademacher complexity of the
non-relaxed hypothesis class $\cH$ that is close to the upper bound of
the relaxed class $\cG$?
\end{itemize}
}

\section{Empirical Results}
\label{sec:empirical}

This section presents empirical evaluation of the online matrix-based
algorithm~\ref{mirror_descent}, as well as the Imputed Ridge
Regression algorithm of Section~\ref{sec:imputation-alg}.
We use baseline methods \emph{zero-imputation} and \emph{mean-imputation}
where the missing entries are replaced with zeros and mean estimated
from observed values of those features resp. Once the data is
imputed, a standard online gradient descent algorithm or
ridge-regression algorithm is used.  As reference, we also show
the performance of a standard algorithm on uncorrupted data. The
algorithms are evaluated on several UCI repository datasets,
summarized in Table~\ref{table:data}. 

The {\tt thyroid} dataset includes naturally
corrupted/missing data.  The {\tt optdigits} dataset is subjected to
artificial corruption by deleting a column of pixels, chosen uniformly
at random from the 3 central columns of the image (each image contains
8 columns of pixels total).  The remainder of the datasets are subjected to two
types of artificial corruption: \emph{data-independent} or
\emph{data-dependent} corruption. In the first case, each feature is
randomly deleted independently, while the features are deleted based
on thresholding values in the latter case. 

\begin{table}
\begin{center}
{\small
\begin{tabular}{l|llcc}
dataset & $m$ & $d$ & $F_I$ & $F_D$  \\
\hline
\hline
{\tt abalone} & 4177 & 7 & $.62 \pm .08$ & $.61 \pm .12$ \\
{\tt housing} & 20640 & 8 & $.64 \pm .08$ & $.68 \pm .20$ \\
{\tt optdigits} & 5620 & 64 & $.88 \pm .00$ & $.88 \pm .00$ \\
{\tt park} & 3000 &  20 & $.58 \pm .06$ & $.61 \pm .08$ \\
{\tt thyroid} & 3163 & 5 & $.77 \pm .00$ & $.77 \pm .00$ \\
{\tt splice} & 1000 & 60 & $.63 \pm .01$ & $.66 \pm .03$ \\
{\tt wine} & 6497 & 11 & $.63 \pm .10$ & $.69 \pm .13$ \\
\end{tabular}
}
\end{center}
\vspace{-0.3cm}
\caption{
\small
Size of dataset ($m$), features ($d$) and, the overall fraction of
remaining features in the training set after data-independent ($F_I$)
or data-dependent ($F_D$) corruption.}
\label{table:data}
\vspace{-0.5cm}
\end{table}
We report average error and standard deviations over 5 trials, using
1000 random training examples and corruption patterns. We tune
hyper-parameters using a grid search from $2^{-12}$ to $2^{10}$.
Further details and explicit corruption processes appear in the
appendix.

\subsection{\mbox{Online Corruption Dependent Hypothesis}}

Here we analyze the online algorithm presented in section
\ref{sec:matrix-alg} using two different types of regularization.  The
first method simply penalizes the Frobenius norm of the parameter matrix
$\A$ (frob-reg), $\cR(\A) = \| \A \|_F^2$.
The second method (sparse-reg) forces a sparse solution by
constraining many entries of the parameter matrix equal to zero as
mentioned in Section~\ref{sec:failed_sensors}. We use the regularizer
$\cR(\A) = \gamma \|\A \1\|^2 + \|\A\|_F^2$, where $\gamma$ is an
additional tunable parameter. This choice of regularization is based
on the example given in equation (\ref{eqn:Amatrixlocal}), where we
would have $\|\A\1\| = \|\w^*\|$.

We apply these methods to the {\tt splice} classification task and the
{\tt optdigits} dataset in several one vs.\ all classification tasks.
For {\tt splice}, the sparsity pattern used by the sparse-reg method
is chosen by constraining those entries $[\A]_{i,j}$ where feature $i$
and $j$ have a correlation coefficient less than 0.2, as measured with
the corrupted training sample.  In the case of {\tt optdigits}, only
entries corresponding to neighboring pixels are allowed to be
non-zero.  

Figure \ref{fig:plots} shows that, when subject to data-independent
corruption, the zero imputation, mean imputation and frob-reg methods
all perform relatively poorly while the sparse-reg method provides
significant improvement for the {\tt splice} dataset. 
Furthermore, we find data-dependent corruption is quite harmful to
mean imputation as might be expected, while both frob-reg and
sparse-reg still provide significant improvement over zero-imputation.
More surprisingly, these methods also perform better than training on
uncorrupted data. We attribute this to the fact that we are using a
richer hypothesis function that is parametrized by the corruption
vector while the standard algorithm uses only a fixed hypothesis.  In
Table \ref{table:online-digits} we see that the sparse-reg performs at
least as well as both zero and mean imputation in all tasks and
offers significant improvement in the 3-vs-all and 6-vs-all task. In
this case, the frob-reg method performs comparably to sparse-reg and
is omitted from the table due to space.

\begin{figure}
\begin{center}
\begin{tabular}{cc}
\includegraphics[width=0.46\columnwidth]{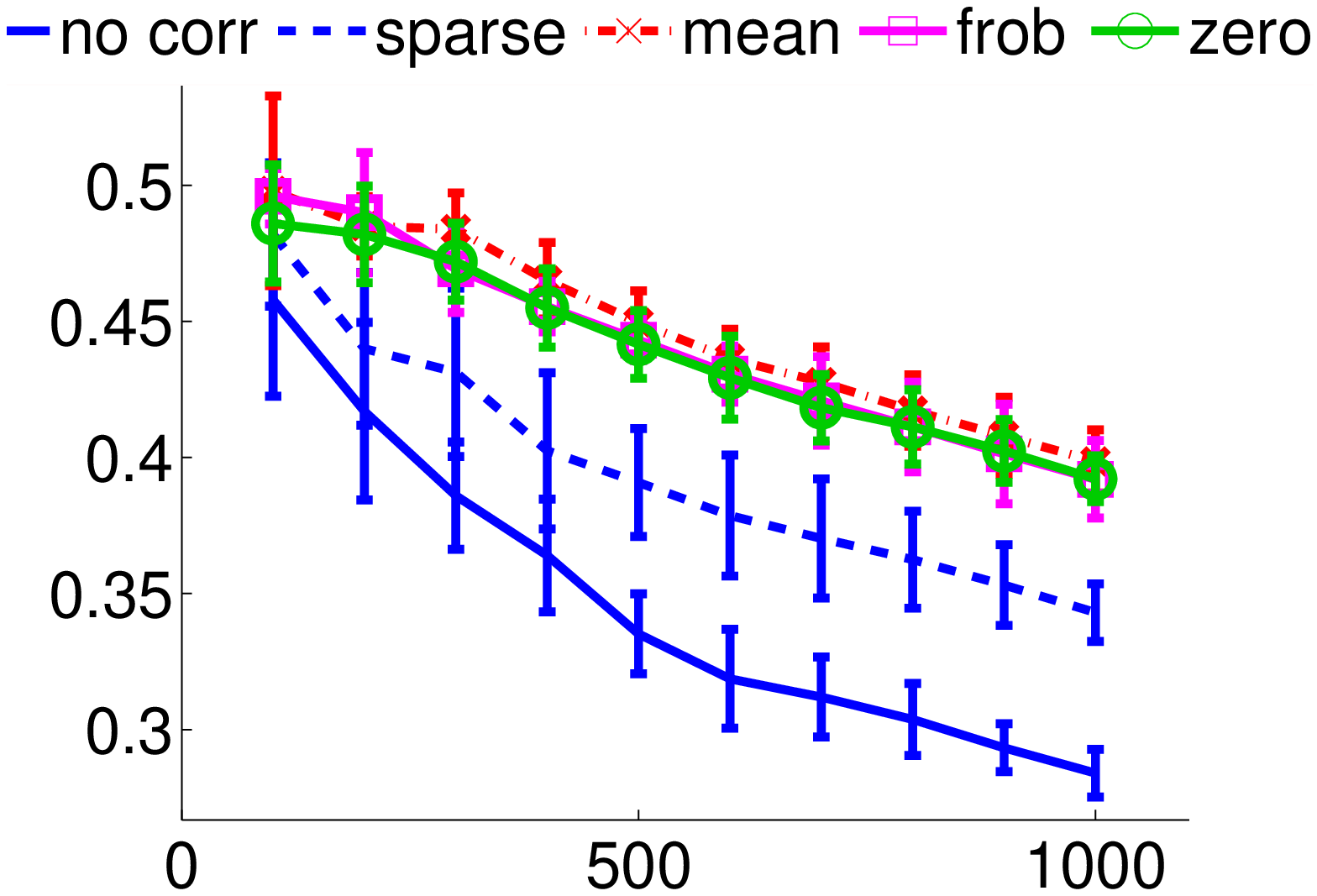} &
\hs
\includegraphics[width=0.46\columnwidth]{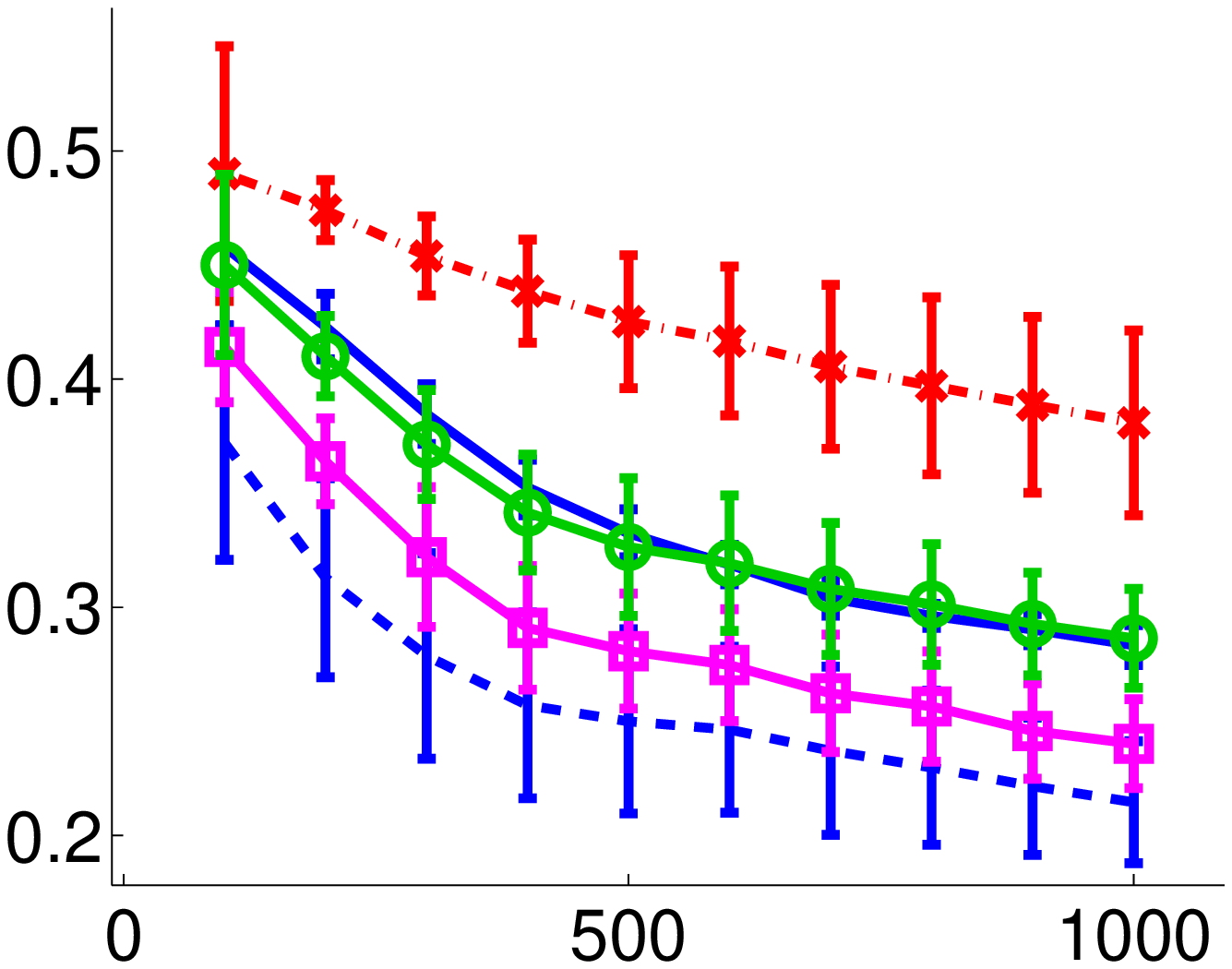} \\
\includegraphics[width=0.46\columnwidth]{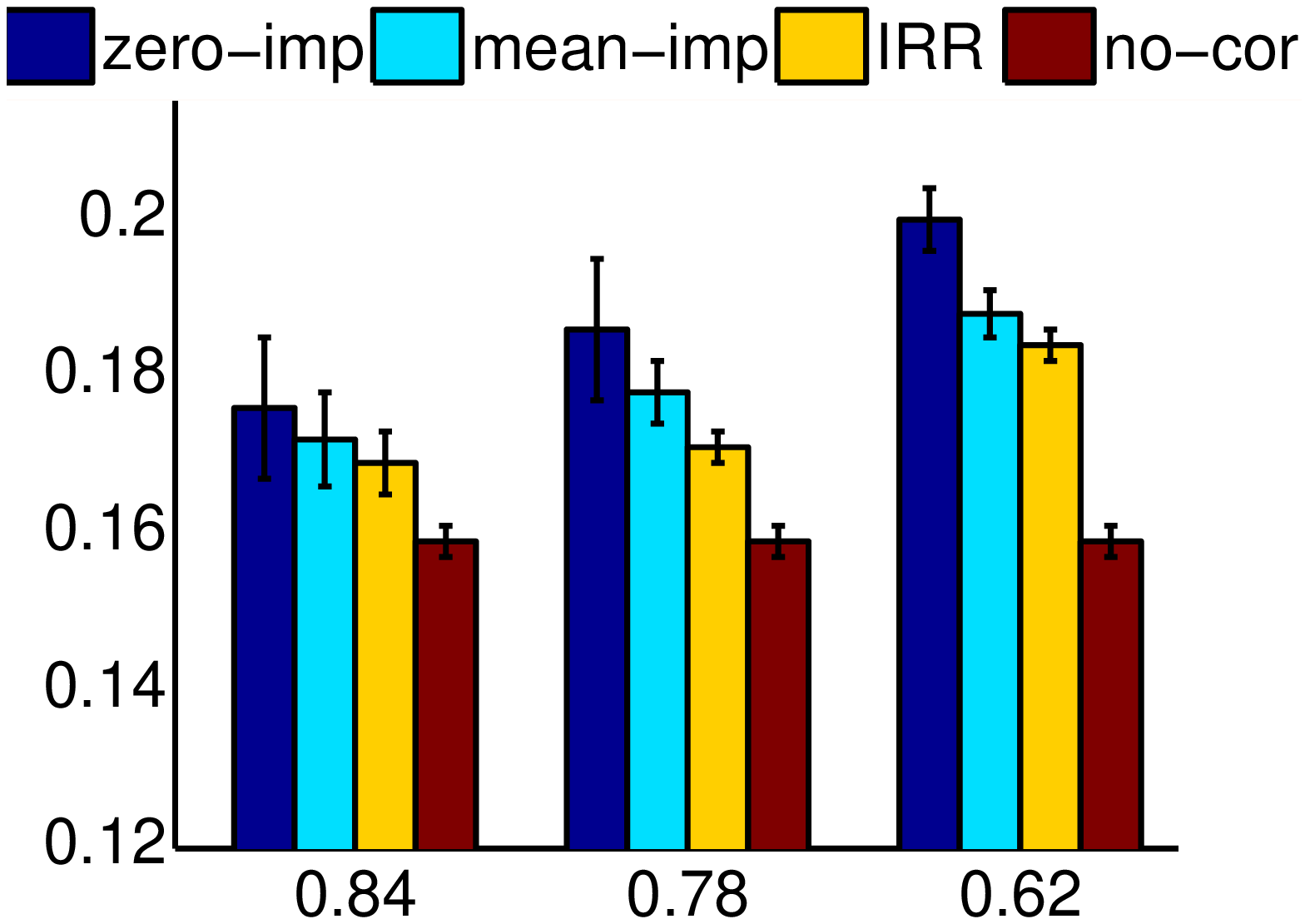} &
\hs
\includegraphics[width=0.46\columnwidth]{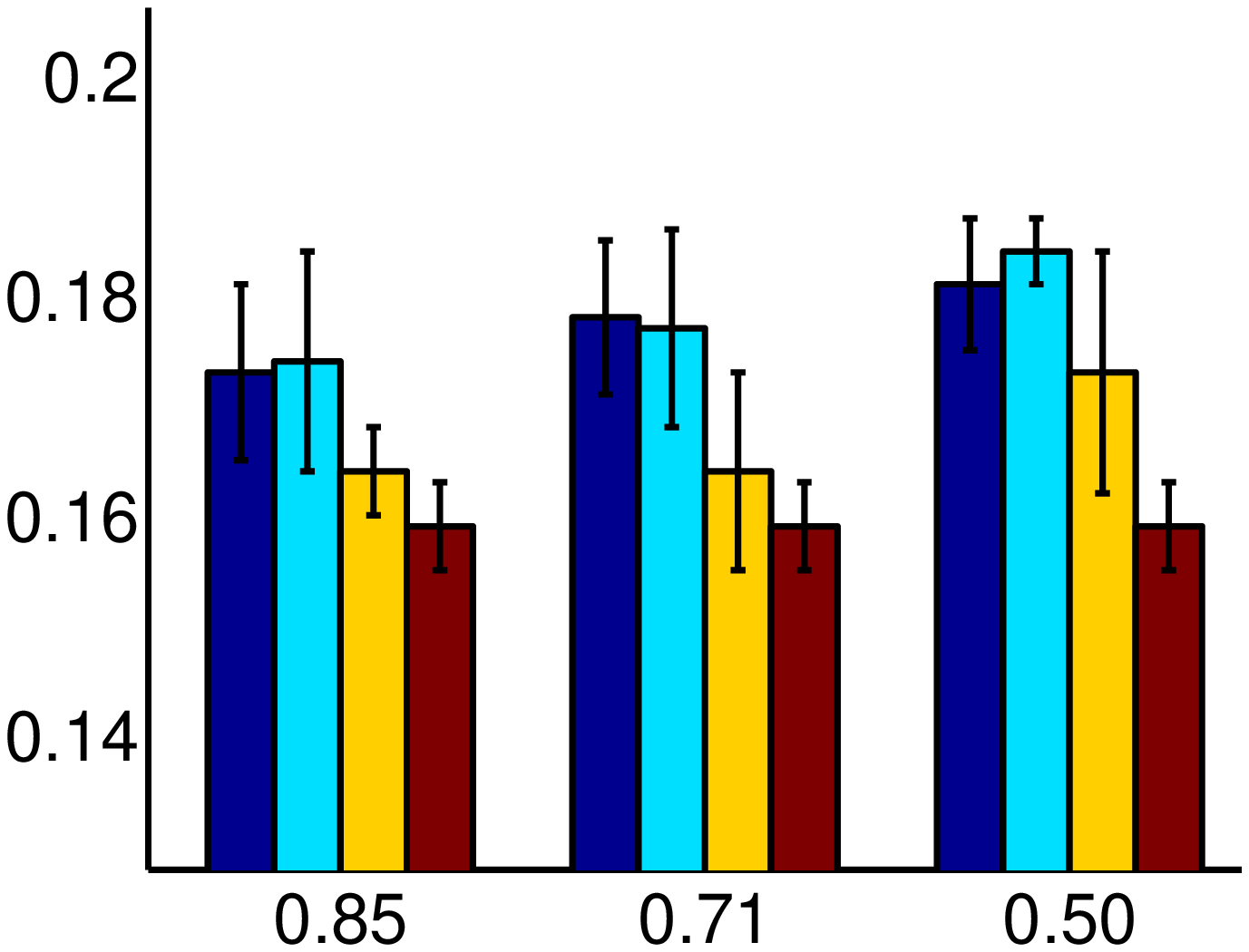} 
\vspace{-0.35cm}
\end{tabular}
\end{center}
\vspace{-0.25cm}
\caption{\small
0/1 loss as a function of $T$ for {\tt splice} dataset with
independent (top left) and dependent corruption (top right).  RMSE on
{\tt abalone} across varying amounts of independent (bottom left) and
dependent corruption (bottom right); fraction of features remaining
indicated on x-axis.}
\label{fig:plots}
\vspace{-0.2cm}
\end{figure}

\begin{table}
\begin{center}
{\small
\begin{tabular}{l|ccc|c}
 & \hhs zero-imp & \hs mean-imp & \hs sparse-reg & no corr \\
\hline
\hline
{2}
 & \hhs $.035 \pm .002$ & \hs $.039 \pm .004$ & \hs $.033
\pm .003$ &  $.024 \pm .002$ \\
\hline
{3}
 & \hhs $.041 \pm .002$ & \hs $.043 \pm .001$ & \hs $\mathbf{.039
\pm .002}$ & $.027 \pm .003$ \\
\hline
{4}
 & \hhs $.020 \pm .002$ & \hs $.023 \pm .002$ & \hs $.021
\pm .001$ & $.015 \pm .001$ \\
\hline
{6}
 & \hhs $.026 \pm .002$ & \hs $.024 \pm .002$ & \hs $\mathbf{.023
\pm .002}$ & $.015 \pm .002$
\end{tabular}
}
\end{center}
\vspace{-0.4cm}
\caption{\small
One-vs-all classification results on \texttt{optdigits} dataset
(target digit in first column) with column-based
corruption for 0/1 loss.}
\label{table:online-digits}
\vspace{-0.5cm}
\end{table}

\subsection{Imputed Ridge Regression}

In this section we consider the performance of IRR across
many datasets.  We found standard SDP solvers to be quite slow for
problem~(\ref{irr_relaxed}). We instead use a semi-infinite
linear program (SILP) to find an approximately optimal
solution (see e.g.~\cite{SDP-SILP} for details).  

In Tables~\ref{table:imputed-results-indep}
and~\ref{table:imputed-results-dependent} we compare the performance
of the IRR algorithm to zero and mean imputation as well as to
standard ridge regression performance on the uncorrupted data.  Here
we see IRR provides improvement over zero-imputation in all cases and
does at least as well as mean-imputation when dealing with
data-independent corruption. For data-dependent corruption, IRR
continues to perform well, while mean-imputation suffers.  For this
setting, we have also compared to an \emph{independent-imputation}
method, which imputes data using an $\M$ matrix that is trained
independently of the learning algorithm.  In particular the $i_{th}$
column of $\M$ is selected as the best linear predictor of the
$i_{th}$ feature given the rest, i.e.\ the solution to: $\argmin_\v
\sum_{k \in \cX_i} ([\x_k]_i - \sum_{j \neq i} [\x_k]_j [\v]_j)^2,$
where $\cX_i$ is the set of training examples that have the $i_{th}$
feature present.  Although, this method can perform better than
mean-imputation, the joint optimization solution provided by IRR
provides an even more significant improvement.  At the bottom of Table
\ref{table:imputed-results-dependent} we also measure performance with
    {\tt thyroid} which has naturally missing values.  Here again IRR
    performs significantly better than the competitor methods.
    Zero-imputation is not shown due to space, but it performs
    uniformly worse. Figure \ref{fig:plots} shows more detailed
    results for the {\tt abalone} dataset across different levels of
    corruption and displays the consistent improvement which the IRR
    algorithm provides.

\begin{table}
\begin{center}
{\small
\begin{tabular}{l|ccc|c}
& \hhs zero-imp & \hs mean-imp & \hs IRR & no corr \\
\hline
\hline
A & \hhs $.199 \pm .004$ & \hs $.187 \pm .003$ & \hs $\mathbf{.183 \pm
.002}$ & $.158 \pm .002$ \\
H & \hhs $.414 \pm .025$ & \hs $\mathbf{.370 \pm .019}$ & \hs
$\mathbf{.373 \pm .019}$ & $.288 \pm .001$ \\
P & \hhs $.457 \pm .006$ & \hs $\mathbf{.445 \pm .004}$ & \hs $.451 \pm .004$ & $.422 \pm .004$ \\
W & \hhs $.280 \pm .006$ & \hs $\mathbf{.268 \pm .009}$ & \hs
$\mathbf{.269 \pm .008}$ & $.246 \pm .001$ \\
\end{tabular}
}
\caption{RMSE for various imputation methods across the datasets {\tt
    abalone} (A), {\tt housing} (H), {\tt park} (P) and {\tt wine} (W)
  when subject to data-independent corruption}
\label{table:imputed-results-indep}
\end{center}
\end{table}
\begin{table}
\begin{center}
{\small
\begin{tabular}{l|ccc|c}
& \hhs mean-imp & \hs ind-imp & \hs IRR & no corr \\
\hline
\hline
A & \hhs $.180 \pm .006$ & \hs $.183 \pm .012$ & \hs $\mathbf{.167 \pm
.011}$ & $.159 \pm .004$ \\
H & \hhs $.400 \pm .064$ & \hs $.363 \pm .041$ & \hs $\mathbf{.326 \pm
.035}$ & $.289 \pm .001$ \\
P & \hhs $.444 \pm .008$ & \hs $.423 \pm .015$ & \hs $\mathbf{.377 \pm
.035}$ & $.422 \pm .001$ \\
W & \hhs $.264 \pm .009$ & \hs $.260 \pm .011$ & \hs $.256 \pm .011$ & $.247
\pm .001$\\
\hline \hline
T & \hhs $.531 \pm .005$ & \hs $.528 \pm .003$ & \hs $\mathbf{.521 \pm
.004}$ & \hs -- 
\end{tabular}
}

\end{center}
\vspace{-0.3cm}
\caption{\small RMSE for various imputation methods across the
  datasets {\tt abalone} (A), {\tt housing} (H), {\tt park} (P) and
  {\tt wine} (W) when subject to data-dependent corruption. The {\tt
    thyroid} (T) dataset has naturally occurring missing features.  }
\label{table:imputed-results-dependent}
\vspace{-0.1cm}
\end{table}


In Table \ref{table:impute-digits} we see that, with respect to the
column-corrupted {\tt optdigit} dataset, the IRR algorithm performs
significantly better than zero-imputation and mean-imputation in majority
of tasks. 
\begin{table}
\begin{center}
{\small
\begin{tabular}{l|ccc|c}
& zero-imp & mean-imp & IRR & no corr \\
\hline
\hline
2 & \hhs $.352 \pm .003$ & \hs $.351 \pm .004$ & \hs $\mathbf{.346 \pm
.002}$ & $.321 \pm .003$ \\
3 & \hhs $.450 \pm .005$ & \hs $.435 \pm .004$ & \hs $\mathbf{.426 \pm
.005}$ & $.398 \pm .004$ \\
4 & \hhs $.372 \pm .003$ & \hs $\mathbf{.363 \pm .002}$ & \hs
$\mathbf{.364 \pm .003}$ & $.345 \pm .002$ \\
6 & \hhs $.369 \pm .003$ & \hs $.360 \pm .002$ & \hs $\mathbf{.353 \pm
.003}$ & $.333 \pm .003$ \\
\end{tabular}
}
\end{center}
\vspace{-0.3cm}
\caption{\small RMSE (using binary labels) for one-vs-all
  classification on \texttt{optdigits} subject to column-based
  corruption. }
\label{table:impute-digits}
\vspace{-0.5cm}
\end{table}

%

\section{Conclusion}
We have introduced two new algorithms, addressing the problem of
learning with missing features in both the adversarial online and
i.i.d.\ batch settings.  The algorithms are motivated by intuitive
constructions and we also provide theoretical performance
guarantees.  Empirically we show encouraging initial results for
online matrix-based corruption-dependent hypotheses as well as many
significant results for the suggested IRR algorithm, which indicate
superior performance when compared to several baseline imputation
methods.  

\subsubsection*{Acknowledgements}
We gratefully acknowledge the support of the NSF under award
DMS-0830410. AA was partially supported by an MSR PhD Fellowship. We
also thank anonymous reviewers for suggesting additional references
and improvements to proofs.

\bibliographystyle{plain}
{ \small
\bibliography{corrupted_features}
}

\newpage

\appendix
\section{Proof of Proposition~\ref{prop:lower-bound}}

The strategy used here is to consider the total regret accumulated by an
algorithm on several different tasks, each one indexed by a different
$\z_t$.

First, in order to simplify the interaction between $\z_t$ and $\x_t$,
suppose only the first $d/2$ coordinates of $\x_t$ contain information
(and the rest are always set equal to 0) and assume only the last
$d/2$ coordinates of $\z_t$ contain any information (the rest are
always set equal to 1).  Thus, for every one of the $2^{d/2}$ distinct
values of $\z_t$ we associate a different independent $\w^*(\z_t)$, or
task, which the algorithm is trying to learn.

The main intuition is that the learning problem now reduces to a
multitask classification problem with $K = 2^{d/2}$ different
tasks. Without further assumptions, it can be shown that the minimax
regret of such a multitask classification problem is
as bad as solving the tasks independently. 

We partition the total number of iterations $T = \sum_{i=1}^{2^{d/2}}
T_i$, where each $T_i$ is the number of iterations a particular $\z_t$
was used by the adversary. In order to analyze the minimax regret, we
can use von-Neumann duality (see e.g. \cite{AbernethyABR2009minimax})
to get
\begin{align*}
  &\inf_{\w_1}\sup_{(\x_1,\z_1,y_1)}\cdots\inf_{\w_T}\sup_{(\x_T,\z_T,y_T)}\\
   & \left[\sum_{t=1}^T
    \ell(\dprod{\w_t(\z_t)}{\x_t'},y_t) - \inf_{\w \in \cW} \sum_{t=1}^T
    \ell(\dprod{\w(\z_t)}{\x_t'},y_t)\right] \\&=
  \sup_{\mathbf{p}}\E\left[\sum_{t=1}^T \inf_{\w_t \in \cW}
    \E[\ell(\dprod{\w_t(\z_t)}{\x_t'},y_t) | (\x_s,y_s,\z_s)_1^{t-1}]
\right. \\ 
    & \qquad \qquad - \left. \inf_{\w \in \cW}\sum_{t=1}^T
\ell(\dprod{\w(\z_t)}{\x_t'},y_t)\right],
\end{align*}
where the supremum is over joint distributions on sequences
$(\x_1,y_1,\z_1),\ldots,(\x_T,y_T,\z_T)$.

It is clear that the first term decomposes over the $2^{d/2}$ tasks
(since it decomposes over individual examples). The second
minimization optimizes over all mappings in the set $\cW$. This can be
done alternatively by maximizing over the choice of a weight vector
for each task individually. As a result, the minimax regret decomposes as a
sum of the minimax regrets for each task.

If we choose $T_i = T / 2^{d/2}$ then the total
regret (which is the sum of the regrets accumulated from each task) 
is measured as follows,
\begin{equation*}
  \sum_{i=1}^{2^{d/2}} R^*(T_i, \ell)
  = \sum_{i=1}^{2^{d/2}} R^*\left(\frac{T}{2^{d/2}},\ell\right)
  = 2^{d/2} R^*\left(\frac{T}{2^{d/2}},\ell\right).
\end{equation*}
This completes the proof of the proposition.

\section{Proof of Theorem~\ref{thm:mmdbound}}

The proof is standard and just included for completeness. We recall
from the update rule that
\begin{equation*}
  \A_{t+1} = \arg\min_{\A \in \cA}\left\{\eta_t\dprod{\nabla
    \ell_t(\A_t)}{\A}_F + D_{\cR}(\A, \A_t)\right\}.
\end{equation*}
Consequently, $\A_{t+1}$ satisfies the first order optimality
conditions:
\begin{equation}
\dprod{\eta_t\nabla\ell_t(\A_t) + \nabla \cR(\A_{t+1}) -
  \nabla\cR(\A_t)}{\A - \A_{t+1}}_F \geq 0,
\label{eqn:firstorder}
\end{equation}
for all $\A \in \cA$. Now for any fixed $\A \in \cA$, we can write the
regret 
\begin{align}
  \nonumber&\sum_{t=1}^T\ell_t(\A_t) - \ell_t(\A) \leq
  \sum_{t=1}^T\dprod{\nabla\ell_t(\A_t)}{\A_t - \A}_F\\
  \nonumber&\leq \sum_{t=1}^T\left[\dprod{\nabla\ell_t(\A_t)}{\A_{t+1} - \A}_F +
  \dprod{\nabla\ell_t(\A_t)}{\A_t - \A_{t+1}}_F\right]\\
  \nonumber&\leq \sum_{t=1}^T\left[\frac{1}{\eta_t}\dprod{\nabla\cR(\A_{t+1}) -
      \nabla\cR(\A_t)}{\A - \A_{t+1}}_F \right.\\&\left.\qquad\qquad\qquad\qquad+
    \dprod{\nabla\ell_t(\A_t)}{\A_t - \A_{t+1}}_F\right].
  \label{eqn:firstbound}
\end{align}
Here the first inequality follows from the convexity of the loss
$\ell_t$ and the last inequality is a consequence
of~\eqref{eqn:firstorder}. Also, applying~\eqref{eqn:firstorder} with
$\A = \A_t$ gives
\begin{align*}
  &\eta_t\dprod{\nabla\ell_t(\A_t)}{\A_t - \A_{t+1}}_F \\&\geq
  \dprod{\nabla\cR(\A_{t+1}) - \nabla\cR(\A_t)}{\A_{t+1} - \A_t}_F\\
  &= D_{\cR}(\A_t, \A_{t+1}) + D_{\cR}(\A_{t+1}, \A_t)\\
  &\geq \|\A_t - \A_{t+1}\|^2,
\end{align*}
where the last step is a consequence of the strong convexity of the
regularizer $\cR$. Finally, applying H\"older's inequality to the LHS
of the above display yields
\begin{align*}
  \|\A_t - \A_{t+1}\|^2 &\leq \eta_t\dprod{\nabla\ell_t(\A_t)}{\A_t -
    \A_{t+1}}_F\\
  &\leq \eta_t\|\nabla\ell_t(\A_t)\|_*\|\A_t - \A_{t+1}\|,
\end{align*}
where $\|\cdot\|_*$ is the dual norm to $\|\cdot\|$. Hence we get
\begin{equation}
  \|\A_t - \A_{t+1}\| \leq \eta_tG,
  \label{eqn:smallstepbound}
\end{equation}
where the last step follows from the Lipschitz assumption in the
theorem statement. As a result, we can bound the second term
in~\eqref{eqn:firstbound} as 
\begin{align}
  \nonumber\dprod{\nabla\ell_t(\A_t)}{\A_t - \A_{t+1}}_F &\leq
  \|\nabla\ell_t(\A_t)\|_*\|\A_t - \A_{t+1}\|\\
  &\leq \eta_tG^2. 
  \label{eqn:term2bound}
\end{align}

For the first term in~\eqref{eqn:firstbound}, we note that 
\begin{align*}
  &\dprod{\nabla\cR(\A_{t+1}) - \nabla\cR(\A_t)}{\A_{t+1} - \A_t}_F\\
  &\leq D_{\cR}(\A, \A_t) - D_{\cR}(\A_{t+1}, \A_t) - D_{\cR}(\A,
  \A_{t+1})\\
  &\leq D_{\cR}(\A, \A_t)  - D_{\cR}(\A,\A_{t+1}),
\end{align*}
where the last step follows from non-negativity of Bregman
divergences. Finally, we combine the two bounds from above and
substitute for the value of $\eta_t = R/(G \sqrt{T})$. Simplifying yields
the statement of the theorem.

\section{Proof of Proposition~\ref{prop:irr_relaxed}}

In order to formulate a tractable problem we first rewrite the
imputed ridge regression problem in its dual formulation.
\begin{align*}
 & \min_{\M} \max_\Alpha 
  ~~  2 \sum_{i=1}^T \alpha_i y_i  - \\
 & ~~ \sum_{i,j=1}^T \! \alpha_i \alpha_j \big( (\x_i + \Zb_i
    \M^\top\x_i)^\top (\x_j + \Zb_j \M^\top\x_j) + \lambda T \I \big) \\
& \mathrm{s.t.} ~~ \|\M\|_F^2 \leq \gamma^2
\end{align*}
The inner maximization problem is concave in $\Alpha$ and the
optimal solution for any fixed $\M$ is found via the standard closed
form solution for ridge regression:
\begin{equation*}
  \Alpha^* = (\underbrace{(\X + \Zb \circ \M\X) (\X + \Zb \circ
\M\X)^\top}_{\K_\M} + \lambda T \I)^{-1} \y \,,
\end{equation*}
where $\circ$ denotes the component-wise (Hadamard) product between
matrices and $\K_\M$ will be used to denote the Gram matrix containing
dot-products between imputed training instances.  Plugging this
solution into the minimax problem results in the following matrix
fractional minimization problem,
\begin{equation*}
  \min_\M ~ \y (\K_\M + \lambda T \I)^{-1} \y,
  ~~ \mathrm{s.t.} ~ \|\M\|_F^2 \leq \gamma^2 \,.
\end{equation*}
This problem is still not convex in $\M$ due to the quadratic terms
that appear in $\K_\M$.  The main idea for the convex relation will be
to introduce new variables $[\N_k]_{i,j}$ which substitute the
quadratic terms $[\M]_{i,k} [\M]_{j,k}$, resulting in a matrix
$\K_{\M\N}$ that is linear in terms of the optimization variables $\M$
and $\N_k$. This is shown precisely below:
\begin{align*}
[\K_{\M}]_{i,j} & = \x_i^\top \x_j^\top 
         + \x_i^\top \M \Zb_i \x_j
         + \x_i^\top \Zb_j \M^\top \x_j \\
   & ~~ + \underbrace{\x_i^\top \M \Zb_i \Zb_j \M^\top \x_j}_{
   \sum_{r,s,k=1}^d [\x_i]_r [\x_j]_s [\zb_i]_k [\zb_j]_k [\M]_{r,k}
    [\M]_{s,k}} \\
[\K_{\M\N}]_{i,j} & = \x_i^\top \x_j^\top 
         + \x_i^\top \M \Zb_i \x_j
         + \x_i^\top \Zb_j \M^\top \x_j \\
    & ~~ + \underbrace{\sum_{k=1}^d [\zb_i]_k [\zb_j]_k \x_i^\top \N_k
\x_j}_{\sum_{r,s,k=1}^d [\x_i]_r [\x_j]_s [\zb_i]_k [\zb_j]_k
[\N_k]_{r,s}}
\end{align*}
Note that the matrix $\K_{\M\N}$ no longer necessarily corresponds to
a Gram matrix and that $(\K_{\M\N} + \lambda T \I)$ may no longer be
positive semi-definite (which is required for the convexity of a
matrix fractional problem objective).  Thus, we add an additional
explicit positive semi-definiteness constraint resulting in the
following optimization problem,
\begin{align*}
  \min_{\M, \N, t} & ~~ t \\
  \mathrm{s.t.} 
    & ~~ t - \y^\top (\K_{\M\N} + \lambda T \I)^{-1} \y \geq 0 \\ 
    & ~~ \K_{\M\N} \succeq 0 \\
    & ~~ \|\M\|_F^2 \leq \gamma^2, 
      ~ \sum_{k=1}^d \|\N_k\|_F^2 \leq \gamma^4 \,,
\end{align*}
where we've additionally added the dummy variable $t$ and also
constrained the norm of the new $[\N_k]_{i,j}$ variables.  The choice
of the upper bound is made with the knowledge that $[\N_k]_{i,j}$
replaces the variables $[\M]_{i,k} [\M]_{j,k}$ and that the bound
$\|\M\|_F \leq \gamma$ implies $\sum_{i,j,k=1}^d [\M]_{i,k}^2
[\M]_{j,k}^2 = \sum_{k=1}^d (\sum_{i=1}^d [\M]_{i,k}^2)^2 
\leq (\sum_{i,k=1}^d [\M]_{i,k}^2)^2 \leq \gamma^4$.

The constraint involving the dummy variable $t$ is a Schur complement
and can be replaced with an equivalent positive semi-definiteness
constraint, which results in a standard form semidefinite program
and completes the proof.

\section{Proof of Theorem~\ref{thm:rademacher}}

It suffices to bound each of the following terms individually:
\begin{align*}
 (a) & ~~ \E_\Ssigma \bigg[ \sup_{\Alpha,\M,\N} 
    \Big| \sum_{i,j=1}^T \sigma_i \alpha_j {\x'_i}^\top \x_j
    \Big| \bigg], \\
 (b) & ~~ \E_\Ssigma \bigg[ \sup_{\Alpha,\M,\N} \Big| \sum_{i,j=1}^T 
    \sigma_i \alpha_j {\x'_i}^\top \Zb_j \M^\top \x_j
    \Big| \bigg], \\
 (c) & ~~ \E_\Ssigma \bigg[ \sup_{\Alpha,\M,\N} \Big| \sum_{i,j=1}^T 
    \sigma_i \alpha_j {\x'_i}^\top \M \Zb'_i \x_j 
    \Big| \bigg], \\
 (d) & ~~ \E_\Ssigma \bigg[ \sup_{\Alpha,\M,\N} \Big| \sum_{i,j=1}^T 
    \sigma_i \alpha_j \sum_{k=1}^d [\zb'_i]_k [\zb_j]_k {\x'}_i^\top \N_k
    \x_j
    \Big| \bigg] ,
\end{align*}
then combining the bounds and dividing by $T$ proves the theorem.

To bound $(a)$ we  first apply the Cauchy-Schwarz inequality to
separate the $\sigma_i$ and $\alpha_j$ terms:
\begin{multline*}
  \E_{\bf \sigma} \big[ \sup_{\Alpha} |\sum_{i,j=1}^T \sigma_i
     \alpha_j {\x'_i}^\top \x_j| \big] \\
   \leq \sup_\Alpha ||\sum_{i=1}^T \alpha_i \x_i|| \E_\sigma \big[
   ||\sum_{i=1}^T \sigma_i \x'_i|| \big]
\end{multline*}
The $\alpha_i$ term is bounded as follows,
\begin{align*}
  \sup_\Alpha ||\sum_{i=1}^T \alpha_i \x_i||
  & = \sup_\Alpha \sqrt{\Alpha^\top \X \X^\top \Alpha} \\
  & \leq \sup_\Alpha \|\Alpha\| \|\X \X^\top \|_2^{1/2} \\
  & \leq \frac{B}{\lambda \sqrt{T}} \sqrt{\Tr[\X \X^\top]}
  \leq \frac{B R}{\lambda} \,.
\end{align*}
The $\sigma_i$ term is bounded using the fact that for Rademacher
independent variables $\sigma_i$ and $\sigma_j$ the expectation
$\E[\sigma_i \sigma_j] = 0$.
\begin{align*}
  \E_{\Ssigma} \big[ ||\sum_{i=1}^T \sigma_i \x'_i|| \big]
  & = \E_{\Ssigma} \big[ \sqrt{\Ssigma^\top \X' \X'^\top \Ssigma} \big] \\
  & = \sqrt{\Tr[\X' \X'^\top]} \leq R \sqrt{T} \,.
\end{align*}
Thus, the first term is bounded as $(a) \leq \frac{BR^2
\sqrt{T}}{\lambda}$. To bound the second term, $(b)$, we again apply
Cauchy-Schwarz to separate the $\Ssigma$ and $(\Alpha,\M)$ terms.  The
$\Ssigma$ portion is again bounded by $R \sqrt{T}$ and the remainder
of the bound follows similar steps as in the bound of $(a)$
decomposing the norm into a product between $\|\Alpha\|$ and a trace
term. If we define $[\B]_{i,j} = \x_i \M \Zb_i \Zb_j \M^\top \x_j$, then
\begin{multline}
\label{eq:alpha_split}
  \sup_{\Alpha, \M} \|\sum_{i=1}^T \alpha_i \Zb_i \M^\top \x_i\| 
  = \sup_{\Alpha, \M} \left( \Alpha^\top \B \Alpha \right)^{1/2} \\
  \leq \sup_{\Alpha, \M} \|\Alpha\| \lambda_{\max}(\B)^{1/2}
  \leq \sup_{\Alpha, \M} \|\Alpha\| \Tr(\B)^{1/2} \\
  \leq \sup_{\Alpha, \M} \|\Alpha\| \Big(\sum_{i=1}^T 
    \x_i^\top \M \Zb_i \Zb_i \M^\top \x_i \Big)^{1/2} \,,
\end{multline}
Where the second inequality follows from the fact that $\B$ is
positive semi-definite.
Note, since $\Zb_i$ is a diagonal $0/1$ matrix we have $\Zb_i \Zb_i =
\Zb_i$.  To bound the term depending on $\M$ we use the following set
of inequalities
\begin{align*}
  \sup_\M \Big( \x_i^\top \M \Zb_i \M^\top \x_i \Big)^{1/2}
  & \leq \sup_\M \Big( \|\x_i\|^2 \Tr[\M \Zb_i \M^\top] \Big)^{1/2}
\\
  & \leq \sup_\M R \Big( \dprod{\M}{\M\Zb_i}_F \Big)^{1/2}
\\
  & \leq \sup_\M R \Big( \|\M\|_F \|\M\Zb_i\|_F \Big)^{1/2}
\\
  & \leq \sup_\M R \|\M\|_F \leq \gamma R \,.
\end{align*}
Thus, the final bound on the second term is $(b) \leq \frac{\gamma BR^2
\sqrt{T}}{\lambda}$.  
In order to separate the $\Ssigma$ terms from
the $(\Alpha, \M)$ terms in the third term, $(c)$, the expression is first
expanded, using $[\M]_{:,s}$ to denote the $s_{th}$ column of the matrix
$\M$, and then Cauchy-Schwarz is applied,
\begin{align*}
  & \sum_{i,j=1}^T \sigma_i \alpha_j {\x'_i}^\top \M \Zb'_i \x_j  \\
  & = \sum_{i,j=1}^T \sigma_i \alpha_j {\x'_i}^\top \big( \sum_{s=1}^d
    [\M]_{:,s} [\zb'_i]_s [\x_j ]_s \big) \\
  & = \sum_{s=1}^d \Big(\sum_{i=1}^T  \sigma_i [\zb'_i]_s \x'_i \Big)^\top
  \Big(\sum_{j=1}^T \alpha_j [\x_j]_s [\M]_{:,s} \Big) \\
  & \leq \underbrace{\Big( \sum_{s=1}^d \Big\| \sum_{i=1}^T  \sigma_i [\zb'_i]_s
\x'_i \Big\|^2 \Big)^{1/2}}_{\mathrm{(i)}}
  \underbrace{\Big(\sum_{s=1}^d \Big\| \sum_{j=1}^T \alpha_j [\x_j]_s [\M]_{:,s}
  \Big\|^2 \Big)^{1/2}}_{\mathrm{(ii)}} \\
\end{align*}
The inequality follows from the fact that, given vectors $\v_s, \u_s$,
we have: 
\begin{multline*}
\sum_s \v_s^\top \u_s = \sum_{s,r} [\v_s]_r [\u_s]_r
\leq ( \sum_{s,r} [\v_s]_r^2 \sum_{s,r} [\u_s]_r^2)^{1/2} \\
= ( \sum_{s} \|\v_s\|^2)^{1/2} (\sum_{s} \|\u_s\|^2)^{1/2} \,,
\end{multline*}
where the inequality follow from Cauchy-Schwarz.
To bound the term (i) that depends on $\Ssigma$ we note
\begin{multline*}
  \E_\Ssigma \Big[ \Big( \sum_{s=1}^d \Big\| \sum_{i=1}^T  \sigma_i
    [\zb'_i]_s \x'_i \Big\|^2 \Big)^{1/2} \Big] \\
  \leq \Big( \sum_{s=1}^d \E_\Ssigma \Big[ \Big\| \sum_{i=1}^T
    \sigma_i [\zb'_i]_s \x'_i \Big\|^2 \Big] \Big)^{1/2}
\end{multline*}
and for any $s$ bound the expectation as follows,
\begin{align*}
 \E_\Ssigma \Big[ \Big\| \sum_{i=1}^T \sigma_i [\zb'_i]_s \x'_i \Big\|^2 \Big]
 = \sum_{i=1}^T [\zb_i']_s^2 \| \x'_i \|^2 \leq T R^2 \,.
\end{align*}
Thus, the term (i) is bounded by $R \sqrt{dT}$.
To bound (ii) we again first rewrite the expression in terms of
$\|\Alpha\|$ and a matrix trace (using a similar argument as in
\eqref{eq:alpha_split}, but with $[\B]_{i,j} = \sum_{s=1}^d [\x_i]_s
[\x_j]_s \sum_{r=1}^d [\M]_{r,s}^2$), then apply the Cauchy-Schwarz
inequality,
\begin{align*}
  &\sup_{\Alpha,\M} \Big(\sum_{s=1}^d \Big\| \sum_{j=1}^T \alpha_j [\x_j]_s
      [\M]_{:,s} \Big\|^2 \Big)^{1/2}  \\
  & = \sup_{\Alpha,\M} \Big( \sum_{i,j=1}^T  \alpha_i \alpha_j
  \sum_{s=1}^d [\x_i]_s [\x_j]_s \sum_{r=1}^d [\M]_{r,s}^2 \Big)^{1/2} \\
  & \leq \sup_{\Alpha,\M} \|\Alpha\| \Big( \sum_{i=1}^T 
    \sum_{s=1}^d [\x_i]_s^2 \sum_{r=1}^d [\M]_{r,s}^2 \Big)^{1/2} \\
  & \leq \frac{B}{\lambda \sqrt{T}}
    \sup_{\M} \Big( \sum_{i=1}^T 
    (\sum_{s=1}^d [\x_i]_s^4)^{1/2} 
    (\sum_{s=1}^d (\sum_{r=1}^d [\M]_{r,s}^2)^2)^{1/2} \Big)^{1/2} \\
  & \leq \frac{B}{\lambda \sqrt{T}}
    \sup_{\M} \Big( \sum_{i=1}^T 
    (\sum_{s=1}^d [\x_i]_s^2)
    (\sum_{r,s=1}^d [\M]_{r,s}^2) \Big)^{1/2} \\
  & = \frac{B}{\lambda \sqrt{T}}
    \sup_{\M} \Big( \sum_{i=1}^T \|\x_i\|^2 \|\M\|_F^2 \Big)^{1/2}
  \leq \frac{\gamma B R}{\lambda} \,.
\end{align*}
Combining these two parts gives a bound of $(c) \leq \frac{\gamma B
R^2 \sqrt{dT}}{\lambda}$. Let $\V$ denote the matrix with $k_{th}$
column equal to $\N_k (\sum_{j=1}^T \alpha_j [\zb_j]_k \x_j)$, then
the bound on $(d)$ follows a similar pattern, first separating
$\Ssigma$ and $(\N, \Alpha)$ using the Cauchy-Schwarz inequality:  
\begin{align*}
  & \E_\Ssigma \bigg[ \sup_{\Alpha,\N} \Big| \sum_{i,j=1}^T 
    \sigma_i \alpha_j \sum_{k=1}^d [\zb'_i]_k [\zb_j]_k {\x'}_i^\top \N_k
    \x_j \Big| \bigg]  \\
  & = \E_\Ssigma \bigg[ \sup_{\Alpha,\N} \Big|
    \sum_{i=1}^T \sigma_i \x_i'^\top \V \zb'_i
    \Big| \bigg]  \\
  & = \E_\Ssigma \bigg[ \sup_{\Alpha,\N} \Big|
    \frob{\sum_{i=1}^T \sigma_i \x_i' \zb_i'^\top}{\V} 
    \Big| \bigg] \\
  & \leq \E_\Ssigma \Big[ \|\sum_{i=1}^T \sigma_i \x_i'
\zb_i'^\top\|_F \Big]  \sup_{\Alpha,\N} \| \V \|_F \,.
\end{align*}
The $\Ssigma$ term is bounded as follows,
\begin{multline*}
  \E_\Ssigma \Big[ \|\sum_{i=1}^T \sigma_i \x_i'
    \zb_i'^\top\|_F \Big] \\
  \leq \Big(  \E_\Ssigma \Big[  \sum_{i,j=1}^T \sigma_i \sigma_j
        \Tr \big[ \x_i' \zb_i'^\top \zb_j' \x_j'^\top  \big] \Big]
       \Big)^{1/2} \\
  =  \Big( \sum_{i=1}^T \|\x_i'\|^2 \| \zb_i'\|^2 \Big)^{1/2}
  \leq R \sqrt{dT} \,.
\end{multline*}
The supremum term is bounded as using the following set of
inequalities,
\begin{align*}
\sup_{\Alpha, \N} \| \V \|_F 
& = \sup_{\Alpha, \N}  \Big( \sum_{k=1}^d \| \sum_{i=1}^T  \alpha_i
  [\zb_i]_k \N_k \x _i \|^2 \Big)^{1/2}\\
& \leq \sup_{\Alpha, \N} \|\Alpha\| 
  \Big( \sum_{k=1}^d \sum_{i=1}^T [\zb_i]_k \x_i^\top \N_k^\top \N_k \x_i
  \Big)^{1/2} \\
& \leq \frac{B}{\lambda \sqrt{T}} \sup_\N \Big( \sum_{i=1}^T
  \|\x_i\|^2  \sum_{k=1}^d \| \N_k\|_F^2 \Big)^{1/2} \\
& \leq \frac{\gamma^2 B R}{\lambda} \,.
\end{align*}
Taking the sum over $k$ results in a bound of $(d) \leq \frac{\gamma^2 B
R^2 \sqrt{d T}}{\lambda}$ and completes the bound.

\section{Rademacher Analysis for Non-Relaxed Class}

In this section we analyze the generalization performance of the
original (non-relaxed) class of imputation-based hypotheses:
\begin{multline}
 \cG = \Big\{ h(\x_0, \z_0) \mapsto \w^\top \big(\x_0 + \Zb_0 \M^\top \x_0)
\\
  : \|\w\| \leq \Lambda, \|\M\|_F \leq \gamma \Big\}
\end{multline}

\begin{theorem}
If we assume a bounded regression problem $\forall y, ~|y| \leq B$
and $\forall \x,~ \|\x\| \leq R$, then the Rademacher complexity of the
hypothesis set $\cG$ is bounded as follows,
\begin{equation*}
  \Rad_T(\cG) \leq (1 + \gamma \sqrt{d}) \Lambda R \sqrt{T} \,.
\end{equation*}
\end{theorem}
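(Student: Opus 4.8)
The plan is to exploit the fact that, unlike the relaxed class $\cH$, every member of $\cG$ is genuinely linear in its single parameter pair $(\w,\M)$, so that $\sum_{i=1}^T \sigma_i h(\x_i,\z_i) = \w^\top\big(\sum_i \sigma_i \x_i\big) + \w^\top\big(\sum_i \sigma_i \Zb_i \M^\top \x_i\big)$. By the triangle inequality I would split the supremum over these two additive pieces and bound each separately, so that the total splits into a ``clean'' linear term and a bilinear $\w$--$\M$ term. This is the same philosophy as the proof of Theorem~\ref{thm:rademacher}, but much simpler because here there is a single sum over $i$ rather than the double sum coming from the dual variables $\Alpha$.

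For the first piece I would run the standard linear-class argument exactly as in part $(a)$ of the proof of Theorem~\ref{thm:rademacher}: pull $\w$ out by Cauchy--Schwarz to get $\sup_{\|\w\|\le\Lambda}|\w^\top\sum_i\sigma_i\x_i| \le \Lambda\|\sum_i\sigma_i\x_i\|$, then take expectations, using Jensen's inequality together with the orthogonality $\E[\sigma_i\sigma_j]=0$ to obtain $\E_\Ssigma\|\sum_i\sigma_i\x_i\| \le (\sum_i\|\x_i\|^2)^{1/2}\le R\sqrt{T}$. This piece contributes $\Lambda R\sqrt{T}$.

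The second piece is where the real work is, and it is essentially a single-index version of part $(c)$. Writing the $s$-th coordinate of $\sum_i\sigma_i\Zb_i\M^\top\x_i$ as $\dprod{[\M]_{:,s}}{\u_s}$ with $\u_s := \sum_i\sigma_i[\zb_i]_s\x_i$, the term becomes $\sum_{s=1}^d [\w]_s\dprod{[\M]_{:,s}}{\u_s}$. I would then apply Cauchy--Schwarz twice: once over the index $s$ to separate $\w$, giving a factor $\|\w\|\le\Lambda$ times $(\sum_s\dprod{[\M]_{:,s}}{\u_s}^2)^{1/2}$, and then per term to peel off $\M$, so that $\sum_s\dprod{[\M]_{:,s}}{\u_s}^2 \le (\max_s\|[\M]_{:,s}\|^2)\sum_s\|\u_s\|^2 \le \|\M\|_F^2\sum_s\|\u_s\|^2\le\gamma^2\sum_s\|\u_s\|^2$. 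The main obstacle is exactly this column-wise decoupling: I want to factor $\|\M\|_F$ out of a sum that couples $\M$ and the Rademacher-dependent vectors $\u_s$ without incurring a hard-to-control $\max_s\|\u_s\|$, and the observation $\max_s\|[\M]_{:,s}\|^2\le\|\M\|_F^2$ makes the $\M$-dependence collapse cleanly to $\gamma$.

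Finally I would take the Rademacher expectation of $\big(\sum_s\|\u_s\|^2\big)^{1/2}$: by Jensen it is at most $\big(\sum_s\E_\Ssigma\|\u_s\|^2\big)^{1/2}$, and by orthogonality of the $\sigma_i$, $\E_\Ssigma\|\u_s\|^2 = \sum_i[\zb_i]_s^2\|\x_i\|^2 \le \sum_i\|\x_i\|^2\le TR^2$. Summing over the $d$ coordinates $s$ produces the extra $\sqrt{d}$, giving $R\sqrt{dT}$, so the second piece is bounded by $\gamma\Lambda R\sqrt{dT}$. Adding the two pieces bounds $\E_\Ssigma[\sup_{h\in\cG}|\sum_i\sigma_i h(\x_i,\z_i)|]$ by $(1+\gamma\sqrt{d})\Lambda R\sqrt{T}$, which (after the $1/T$ normalization in the definition of $\Rad_T$, exactly as in the proof of Theorem~\ref{thm:rademacher}) yields the claimed bound. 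Everything except the $\w$--$\M$ decoupling step is the same Jensen-plus-orthogonality computation used repeatedly in the relaxed analysis.
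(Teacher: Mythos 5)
Your proposal is correct and follows essentially the same route as the paper's proof: the same split of $\sum_i\sigma_i h(\x_i,\z_i)$ into the clean linear term (bounded by $\Lambda R\sqrt{T}$ via Cauchy--Schwarz and $\E[\sigma_i\sigma_j]=0$) and the bilinear $\w$--$\M$ term, which in both arguments is decoupled by Cauchy--Schwarz into $\Lambda\gamma$ times the quantity $\bigl(\sum_{s}\bigl\|\sum_i\sigma_i[\zb_i]_s\x_i\bigr\|^2\bigr)^{1/2}$, whose expectation is $R\sqrt{dT}$ by Jensen and orthogonality. The only difference is bookkeeping --- you apply Cauchy--Schwarz first over the coordinate index $s$ and then per column of $\M$ (using $\max_s\|[\M]_{:,s}\|\le\|\M\|_F$), whereas the paper applies a single Cauchy--Schwarz over the double index $(s,r)$ --- and both yield the identical final bound.
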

\begin{proof}
We wish to bound
\begin{multline*}
  \E_\Ssigma \Big[ \sup_{\w, \M} \big| \sum_{i=1}^T \sigma_i
\w^\top(\x_i + \Zb_i \M^\top \x_i) \big|  \Big] \\
  \leq \E_\Ssigma \! \Big[ \sup_{\w} \big|\! \sum_{i=1}^T \! \sigma_i
\w^\top \! \x_i \big|  \Big] 
  \! + \E_\Ssigma \! \Big[ \sup_{\w, \M} \big| \! \sum_{i=1}^T \! \sigma_i
\w^\top \Zb_i \M^\top \! \x_i \big|  \Big] \,.
\end{multline*}
The first term is standard and is bounded by $\Lambda R \sqrt{T}$.
We bound the second term by first separating the terms depending on
$\w, \M$ and those depending on $\Ssigma$:
\begin{multline*}
  \sum_{i=1}^T \sigma_i \w^\top \! \Zb_i \M^\top \x_i
 =  \!\! \sum_{s,r=1}^d [\w]_r [\M]_{s,r} 
   \Big(\sum_{i=1}^T \sigma_i [\x_i]_s [\zb_i]_r \Big) \\
 \leq \underbrace{\Big( \sum_{s,r=1}^d [\w]_r^2 [\M]_{s,r}^2
\Big)^{\frac12}}_{\mathrm{(a)}}
  \underbrace{\Big(\sum_{s,r=1}^d \big(\sum_{i=1}^T \sigma_i [\x_i]_s [\zb_i]_r
\big)^2 \Big)^{\frac12}}_{\mathrm{(b)}} \,,
\end{multline*}
where the inequality follows from the Cauchy-Schwartz inequality. We
bound the first term using the fact that every term in the sum is
positive and adding an additional summation:
\begin{multline*}
\!\!\! \mathrm{(a)} \leq \Big( \!\big(\sum_{r=1}^d [\w]_r^2\big)\big(
\sum_{r,s=1}^d [\M]_{s,r}^2 \big) \!\Big)^{\frac12} 
  \!=\! \|\w\| \|\M\|_F \leq \Lambda \gamma .
\end{multline*}
The second term is bounded by first applying Jensen's inequality and
then the property that $\E_\Ssigma[ (\sum_{i} \sigma_i a_i)^2 ] =
\sum_{i} a_i^2$ for any constants $a_i$.
\begin{multline*}
\mathrm{(b)} \leq \Big( \sum_{s,r=1}^d \E_\Ssigma [(\sum_{i=1}^T
\sigma_i [\x_i]_s [\zb_i]_r)^2] \Big)^{\frac12} \\
  =  \Big(\sum_{i=1}^T \sum_{s,r=1}^d [\x_i]_s^2 [\zb_i]_r^2 \Big)^{\frac12}
  \leq \Big(\sum_{i=1}^T \| \x_i \|^2 \|\zb_i\|^2 \Big)^{\frac12} \\
  \leq R \sqrt{d T} \,.
\end{multline*}
Combining the two bounds and dividing by $T$ proves the theorem.
\end{proof}

\section{Experimental Setup}
We start by explaining corruption-dependent and corruption-independent
corruption processes. In the former case a 
probability of corruption is chosen uniformly at random from $[0,\beta]$ for
each feature (where $\beta$ can be tuned to induce more or less
corruption), independent of the other features and independent of data
instance (i.e. $\z_t$ is independent of $\x_t$).  In the latter case,
a random threshold $\tau_k$ is chosen for each feature uniformly
between $[0,1]$ as well as a sign $\sigma_k$ chosen uniformly from
$\{-1,1\}$. Then, if a feature satisfies $\sigma([\x_i]_k - \tau_k) >
0$, it is deleted with probability $\beta$, which again can be tuned
to induce more or less missing data. Table~\ref{table:data} shows the
average fraction of features remaining after being subject to each type
of corruption; that is, the total sum of features available over all
instances divided by the total number of features that would be
available in the corruption-free case.

The average error-rate along with one standard deviation is reported
over 5 trials each with a random fold of $1000$ training
points\footnote{With the exception of {\tt optdigits} in the online
setting where $3000$ points are used.}. In the batch setting the
remainder of the dataset in each trial is used as the test set.  When
applicable, each trial is also subjected to a different random
corruption pattern.  All scores are reported with respect to the best
performing parameters, $\lambda$, $C$, $\eta$ and $\gamma$, tuned across
the values $\{2^{-12}, 2^{-11}, \ldots, 2^{10}\}$.

\end{document}